\documentclass{llncs}
\usepackage{graphicx}

\usepackage{amsmath,amssymb,amsthm}
\usepackage{enumitem}

\usepackage[bibliography=common]{apxproof}

\renewcommand{\phi}{\varphi}

\def\props{\textit{Prop}}
\def\O{\mathbf O}
\def\F{\mathbf F}
\def\M{\mathcal M}
\def\lang{\mathcal{L}}
\def\atoms{\Omega}
\def\modelspl{\Vdash_{\textnormal{CL}}}
\def\impliespl{\models_{\textnormal{CL}}}
\def\vdashpl{\vdash_{\textnormal{CL}}}

\def\MDL{\textnormal{MDL}}
\def\langmdl{\lang_{\MDL}}
\def\modelsmdl{\Vdash_{\MDL}}
\def\impliesmdl{\models_{\MDL}}

\def\tdl{\textnormal{TDL}}
\def\langtiny{\lang_{\tdl}}
\def\mtiny{\M^{\tdl}}
\def\atomstiny{\atoms^{t}}

\def\ldl{\textnormal{LDL}}
\def\langl{\lang_{\ldl}}
\def\modelsl{\Vdash_{\ldl}}
\def\impliesl{\models_{\ldl}}
\def\vdashl{\vdash_{\ldl}}

\def\ldlb{\textnormal{LDLB}}
\def\langb{\lang_{\ldlb}}
\def\modelsk{\Vdash_{\ldlb}}
\def\impliesk{\models_{\ldlb}}
\def\vdashk{\vdash_{\ldlb}}
\def\atomsb{\atoms^{B}}
\def\agents{\mathit{Ag}}
\def\rdda{\textnormal{RDDA}}

\newcommand{\statevocab}{\Sigma}
\newcommand{\decatoms}{\mathcal D}
\newcommand{\trueatoms}{\mathcal T}
\newcommand{\falseatoms}{\mathcal F}
\newcommand{\countednorms}{N}
\newcommand{\normweight}{v}
\newcommand{\actagent}{a}
\newcommand{\dncoptuple}{(\statevocab,\trueatoms,\falseatoms,\decatoms,\decconstraint,\countednorms,\normweight)}
\newcommand{\dbar}{\overline{\decatoms}}
\newcommand{\siga}{\statevocab^{\actagent}}
\newcommand{\choice}[2]{\mathsf{Choice}(#1,#2)}
\newcommand{\objdesc}[2]{\mathsf{Obj}(#1,#2)}
\newcommand{\subjdesc}[2]{\mathsf{Subj}(#1,#2)}
\newcommand{\objcomp}[2]{\mathsf{Mod}^{\mathrm{obj}}(#1,#2)}
\newcommand{\subjcomp}[2]{\mathsf{Mod}^{\mathrm{sub}}(#1,#2)}
\newcommand{\violformula}[1]{\chi_{#1}}
\newcommand{\violcount}[2]{\mathsf{viol}(#1,#2)}
\newcommand{\objviol}[1]{\mathsf{viol}^{\mathrm{obj}}(#1)}
\newcommand{\subjviol}[1]{\mathsf{viol}^{\mathrm{sub}}(#1)}
\newcommand{\objopt}[1]{\mathsf{viol}_{\min}^{\mathrm{obj}}(#1)}
\newcommand{\subjopt}[1]{\mathsf{viol}_{\min}^{\mathrm{sub}}(#1)}
\newcommand{\objopta}[1]{\mathsf{optdec}^{\mathrm{obj}}(#1)}
\newcommand{\subjopta}[1]{\mathsf{optdec}^{\mathrm{sub}}(#1)}
\newcommand{\objval}[1]{\nu_{#1}}
\newcommand{\decconstraint}{\chi}
\newcommand{\COMMA}{,\allowbreak}
\newcommand{\decisions}[1]{\mathsf{Dec}(#1)}

\theoremstyle{plain}
\newtheoremrep{thm}{Theorem}
\newtheoremrep{lem}[thm]{Lemma}
\newtheoremrep{prop}[thm]{Proposition}
\newtheoremrep{defn}[thm]{Definition}
\newtheoremrep{ex}[thm]{Example}

\title{A Simple Doxastic Deontic Logic for Norm-Guided Decision Making}

\author{Thorsten~Engesser \and Agata~Ciabattoni}
\authorrunning{T. Engesser and A. Ciabattoni}
\institute{TU Wien\\
  \email{thorsten@logic.at, agata@logic.at}}

\begin{document}
\maketitle

\begin{abstract}
  Making decisions despite conflicting norms and incomplete or unreliable information is a
  fundamental challenge for autonomous systems. We introduce a simple doxastic deontic logic for this
  setting: a classically reducible fragment of Chellas' Minimal Deontic Logic, extended with explicit
  conditional norms and combined with multi-agent KD45, so that norms can depend on agents' beliefs
  about both facts and norms. On this logic we define the Doxastic Norm Compliance Optimization
  Problem, where an agent chooses a decision minimizing weighted norm violations. We distinguish
  subjective optimization (relative to the agent's beliefs) from objective optimization (relative to
  the actual facts). We give conditions under which (i) the two coincide and (ii) optimal
  decision-making can be reduced to weighted partial MaxSAT in polynomial time.
\end{abstract}

\section{Introduction}
AI systems are increasingly deployed in domains governed by legal, organizational, or ethical
norms, which specify obligatory and forbidden actions. Autonomous vehicles, assistive robots, or
decision-support systems must therefore select actions that comply with applicable regulations and
policies. However, these systems may face conflicting norms and partial or unreliable beliefs about
their environment, making it difficult to determine which norms apply.

Norm-guided decision making therefore raises two key challenges. First, full compliance with norms
may be impossible. In such cases, an intelligent agent should select the action that minimizes the
severity of norm violations (the lesser evil). Second, many norms are conditional, so their
applicability may depend on the agent’s beliefs about the world. The agent must therefore determine
which norms apply before selecting an action.

Existing approaches address different aspects of these challenges but do not provide a unified
framework for doxastic normative reasoning that is both expressive and suitable for automated
decision making. Deontic logics remain the principal tool to formalize and reason about
norms~\cite{Hand13,Hand21}. Among these, preference-based deontic logics~\cite{A84,xavier_survey}
provide elegant treatments of conditional norms and contrary-to-duty (CTD)
obligations\footnote{Obligations that come into force when other obligations are violated.} but
struggle with factual detachment, i.e., inferring a concrete obligation from a conditional norm
once the relevant facts are established, e.g.~\cite{DBLP:conf/deon/CiabattoniU025}. Input/Output
logics~\cite{DBLP:journals/jphil/MakinsonT00}, instead, naturally support factual detachment of
conditional norms, and in their constrained variants~\cite{DBLP:journals/jphil/MakinsonT01}, can
accommodate certain conflicts and CTD reasoning, but not higher-order norms. Moreover, these
formalisms do not work well in the presence of dilemmas (where an action is both obligatory and
prohibited) or belief-dependent norms. Logic-programming approaches
(e.g.~\cite{DBLP:journals/cacm/SergotSKKHC86,DBLP:journals/jphil/GovernatoriORS13,DBLP:journals/jphil/KowalskiS18}),
including deontic extensions of Answer Set Programming (via modal
operators~\cite{DBLP:journals/jancl/AlferesGL13,DBLP:conf/icail/GiordanoMD13} or atomic
norms~\cite{DBLP:conf/jelia/CabalarCT23}), offer practical implementations but have not been
combined with doxastic reasoning.

Motivated by this gap, we develop a logical framework for norm-guided decision making under
imperfect information. Rather than aiming for a comprehensive theory of deontic reasoning, our
objective is to identify a fragment that captures the essential mechanisms needed for practical
decision support while remaining simple and easily extensible. Our starting point is a restricted
fragment of Chellas' Minimal Deontic Logic~\cite{DBLP:books/cu/Chellas80}, in which obligations and
prohibitions apply only to atomic normative contents. We show that reasoning in this fragment
reduces to ordinary classical propositional reasoning. We then enrich it with explicitly
represented conditional norms and factual detachment, obtaining a Lightweight Deontic Logic (LDL)
that captures key aspects of conditional normative reasoning, including higher-order norms for
expressing obligations that govern when other obligations apply.

To capture the role of beliefs in norm applicability, we combine LDL with multi-agent doxastic
logic~\cite{Hintikka1962,DBLP:journals/ai/HalpernM92}, allowing explicit norms to depend on agents'
beliefs, while agents themselves may reason about both factual information and their beliefs about
norms. Building on this logical framework, we introduce the Doxastic Norm Compliance Optimization
Problem (DNCOP), in which an acting agent must choose among alternative decisions while minimizing
violations of weighted norms, adopting a sort of utilitarian
perspective~\cite{mill1863utilitarianism} that aggregates norm violations into a single
quantitative measure of harm. The resulting framework supports encoding CTDs and exceptions through
optimization of weighted norms, while separately distinguishing optimization based on the agent's
subjective beliefs from optimization based on the objective state of the world. We identify natural
locality and truthfulness conditions under which these two perspectives coincide; under these
conditions, computing optimal decisions reduces in polynomial time to weighted partial
MaxSAT~\cite{DBLP:conf/sat/AnsoteguiBL09}, allowing existing optimization technology to be used
directly.

The resulting framework combines logical transparency with computational efficiency: norms remain
explicitly represented, beliefs affect their applicability, and decisions are obtained through a
well-defined optimization problem. Its modular design also allows integration with existing
epistemic logics~\cite{Hintikka1962,DBLP:journals/ai/HalpernM92,DBLP:conf/ecai/LakemeyerL12}, thus
supporting more expressive normative reasoning in multi-agent systems.

\section{Preliminaries: Chellas' Minimal Deontic Logic}
Chellas’ Minimal Deontic Logic (MDL)~\cite{DBLP:books/cu/Chellas80} is often presented as a
response to the deontic paradoxes that have beset the first formalisation of deontic logic—Standard
Deontic Logic~\cite{von1967deontic} (i.e., the normal modal logic KD)—and as a system that
tolerates deontic conflicts. MDL is a very weak system whose language $\langmdl$ is defined as
$\phi := p \in \props \mid \neg \phi \mid \phi \land \phi \mid O\phi$, where (for the purposes of
this paper) $\props$ is a finite set of (classical) atoms.

$O\phi$ means “$\phi$ is obligatory”. We define $F\phi := O\neg\phi$ (“$\phi$ is forbidden”) and
$P\phi := \neg F\phi$ (“$\phi$ is permitted”). Boolean connectives and constants are as usual.

Formulas from $\langmdl$ are evaluated over \emph{neighborhood models} $\M = (W,N,V)$, where $W$ is
a set of \emph{worlds}, $N : W \to 2^{2^W}$ is a \emph{neighborhood function}, and $V : W \to
  2^{\props}$ assigns to each world a propositional \emph{valuation}. As frame properties, Chellas'
semantics assumes that $\emptyset \notin N(w)$ for all $w \in W$, and that $N(w)$ is closed under
supersets. That is, whenever $U \in N(w)$ and $U \subseteq X \subseteq W$, then $X \in N(w)$. Truth
in a world $w$ of $\M$ is defined as follows:
\begin{align*}
   & \M, w \modelsmdl p               & \text{ iff } &  & p \in V(w)\text{,}                                               \\
   & \M, w \modelsmdl O \phi          & \text{ iff } &  & \{w' \mid \M, w' \modelsmdl \phi\} \in N(w)\text{,}              \\
   & \M, w \modelsmdl \neg \phi       & \text{ iff } &  & \M, w \not\modelsmdl \phi\text{,}                                \\
   & \M, w \modelsmdl \phi \land \psi & \text{ iff } &  & \M, w \modelsmdl \phi \text{ and } \M, w \modelsmdl \psi\text{.}
\end{align*}

\begin{remark}
  \label{remark:monadicVSdyadic}
  MDL obligations are monadic. A naive conditional obligation can be defined via material implication
  as $O(\phi | \psi) := \psi \rightarrow O\phi$. This yields detachment, since $O(\phi | \psi) \land
    \psi \rightarrow O\phi$, but it behaves inadequately when the obligation essentially depends on the
  condition. In particular, because $\to$ is true whenever its antecedent is false, $O(\phi | \psi)$
  becomes true in all states where $\psi$ does not hold.
\end{remark}

\section{A Tiny Fragment of MDL}
We identify a fragment of MDL that admits reasoning within classical propositional logic. In this
fragment, obligations and prohibitions apply only to atomic normative contents rather than
arbitrary formulas. This restriction is inspired by the deontic theories of the Sanskrit
philosophical school of M{\={\i}}m{\={a}}{\d{m}}s{\={a}}, where commands are understood as having a
single goal~\cite{DBLP:journals/jolli/BerkelCFGO23}, and it is also in line with the deontic
extension DELX of Answer Set Programming~\cite{DBLP:conf/jelia/CabalarCT23}.

More precisely, we define the \emph{tiny deontic language} $\langtiny$ as $$ \phi := \omega \mid
  \neg \phi \mid \phi \land \phi \qquad \text{ and } \qquad \omega := p \mid O \omega \mid F \omega
$$ where $p \in \props$. We call the subset defined by $\omega ::= p \mid O \omega \mid F \omega$
\emph{simple normative atoms} and denote by $\atomstiny$ the set of all such atoms.

\begin{remark}
  Compared to the full language $\langmdl$, $\langtiny$ excludes obligations over Boolean
  combinations (and therefore also over the constants $\top$ or $\bot$). In particular, formulas such
  as $O(\phi \land \psi)$ are not well formed; instead, we use $O\phi \land O\psi$. Note that in
  Chellas' semantics, $O(\phi \land \psi)$ implies $O\phi \land O\psi$, but not conversely. Also
  obligations over disjunctions, such as $O(\phi \lor \psi)$, are likewise not admitted. An
  additional advantage of this restricted formulation is that it blocks problematic inferences such
  as Ross' paradox~\cite{Ross1941}, according to which the obligation to mail a letter entails the
  obligation to mail or burn it.
\end{remark}

\subsubsection{Classical Logic as Alternative Semantics}
We show that reasoning in this fragment reduces to reasoning in classical propositional logic.
Intuitively, the only propositionally valid implications between simple normative atoms are
identities of the form $\omega \to \omega$ (as $\bot \not\in \langtiny$, the MDL axiom $\neg O
  \bot$ is not in the fragment).

For the formal proof we provide a canonical model ($\mtiny$) for the fragment, with one world for
each possible propositional valuation over the set $\atomstiny$ of simple normative atoms. The
lemmas are proven in the appendix.

\begin{defnrep}
  $\mtiny$ is $(W^{\textit{t}},N^{\textit{t}},V^{\textit{t}})$ with $W^t = 2^{\atomstiny}$, and for
  all $w \in W^t$, $V^t(w) = w \cap \props$, and $N^t(w) = closure(w^O \cup w^F)$, where
  \begin{itemize}
    \item $w^O = \{\{w' \in W^t \mid \omega \in w'\} \mid O \omega \in w\}$,
    \item $w^F = \{\{w' \in W^t \mid \omega \not\in w'\} \mid F \omega \in w\}$, and
    \item $closure(X) = \{U \subseteq W^t \mid \text{there exists an } x \in X \text{ such that } x \subseteq
            U\}$.
  \end{itemize}
\end{defnrep}

\begin{lemrep}\label{lem:mtiny1}
  $\mtiny$ is a Chellas-style neighborhood model.
\end{lemrep}

\begin{appendixproof}
  We need to show that $\mtiny$ satisfies the frame properties. Clearly, for each $\omega \in
    \atomstiny$, the set $\{w' \in W^t \mid \omega \in w'\}$ is non-empty (since $W^t$ contains all
  subsets of $\atomstiny$). Thus, $\emptyset \notin w^O$. Analogously, for each $\omega \in
    \atomstiny$, the set $\{w' \in W^t \mid \omega \notin w'\}$ is non-empty. Thus, $\emptyset \notin
    w^F$. Intuitively, $closure(w^O \cup w^F)$ closes $w^O \cup w^F$ minimally under supersets. The
  superset frame property follows immediately. Since $\emptyset \notin w^O \cup w^F$, it follows that
  $\emptyset \notin closure(w^O \cup w^F)$.
\end{appendixproof}

\begin{lemrep}\label{lem:mtiny2}
  Let $w \in W^t$ and $\omega \in \atomstiny$. Then $\mtiny, w \modelsmdl \omega$ iff $\omega \in w$.
\end{lemrep}

\begin{appendixproof}
  By structural induction. Base case: $\omega = p$ for some $p \in \props$. By definition, $p \in
    V^t(w)$ iff $p \in w$. Thus, $\mtiny, w \modelsmdl p$ iff $p \in w$.

  For the induction step, we assume that $\mtiny, w \modelsmdl \omega$ iff $\omega \in w$ for some
  fixed $\omega$. We therefore know that $\{w' \in W^t \mid \omega \in w'\}$ contains exactly all $w
    \in W^t$ such that $\mtiny,w \modelsmdl \omega$, and that $\{w' \in W^t \mid \omega \notin w'\}$
  contains exactly all $w \in W^t$ such that $\mtiny,w \modelsmdl \neg \omega$.

  Thus, $O\omega \in w$ implies that $\{w' \in W^t \mid \mtiny, w' \modelsmdl \omega\} \in N^t(w)$,
  meaning $\mtiny, w \modelsmdl O \omega$. Analogously, $F\omega \in w$ implies that $\{w' \in W^t
    \mid \mtiny, w' \modelsmdl \neg\omega\} \in N^t(w)$, meaning $\mtiny, w \modelsmdl F \omega$.

  For the converse, assume first that $\mtiny, w \modelsmdl O\omega$. By the induction hypothesis,
  $\{w' \in W^t \mid \mtiny,w' \modelsmdl \omega\} = \{w' \in W^t \mid \omega \in w'\}$. Hence $\{w'
    \in W^t \mid \omega \in w'\} \in N^t(w)=closure(w^O \cup w^F)$. So there is some $x \in w^O \cup
    w^F$ such that $x \subseteq \{w' \in W^t \mid \omega \in w'\}$. Now $x$ cannot lie in $w^F$, since
  for every $\psi \in \atomstiny$ the world $\emptyset$ is in $\{w' \in W^t \mid \psi \notin w'\}$,
  but $\emptyset \notin \{w' \in W^t \mid \omega \in w'\}$. Thus $x \in w^O$, so $x=\{w' \in W^t \mid
    \psi \in w'\}$ for some $\psi$ with $O\psi \in w$. If $\psi \neq \omega$, then the world $\{\psi\}$
  belongs to $x$ but not to $\{w' \in W^t \mid \omega \in w'\}$, contradicting $x \subseteq \{w' \in
    W^t \mid \omega \in w'\}$. Therefore $\psi=\omega$, and hence $O\omega \in w$.

  Analogously, assume that $\mtiny, w \modelsmdl F\omega$. By the induction hypothesis, $\{w' \in W^t
    \mid \mtiny,w' \modelsmdl \neg \omega\} = \{w' \in W^t \mid \omega \notin w'\}$. Hence $\{w' \in
    W^t \mid \omega \notin w'\} \in N^t(w)=closure(w^O \cup w^F)$. So there is some $x \in w^O \cup
    w^F$ such that $x \subseteq \{w' \in W^t \mid \omega \notin w'\}$. Now $x$ cannot lie in $w^O$,
  since for every $\psi \in \atomstiny$ the world $\{\psi,\omega\}$ is in $\{w' \in W^t \mid \psi \in
    w'\}$, but not in $\{w' \in W^t \mid \omega \notin w'\}$. Thus $x \in w^F$, so $x=\{w' \in W^t \mid
    \psi \notin w'\}$ for some $\psi$ with $F\psi \in w$. If $\psi \neq \omega$, then the world
  $\{\omega\}$ belongs to $x$ but not to $\{w' \in W^t \mid \omega \notin w'\}$, again a
  contradiction. Therefore $\psi=\omega$, and thus $F\omega \in w$.
\end{appendixproof}

By Lemma~\ref{lem:mtiny2}, every simple normative atom $\omega \in \atomstiny$ is true in $\mtiny$
at $w$ exactly if $\omega \in w$. Since negation and conjunction have the usual classical truth
conditions, it follows by structural induction that truth in $\mtiny$ agrees with truth in
classical logic for all formulas of $\langtiny$. Henceforth we use $\modelspl$ for the truth
relation in classical propositional logic over the set of propositions $\atomstiny$:

\begin{thm}\label{thm:prop}
  Let $w \subseteq \atomstiny$ and $\phi \in \langtiny$. Then $\mtiny, w \modelsmdl \phi$ iff $w
    \modelspl \phi$.
\end{thm}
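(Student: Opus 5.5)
The plan is a structural induction on $\phi \in \langtiny$, following the grammar $\phi := \omega \mid \neg\phi \mid \phi\land\phi$. The statement is relative to classical propositional logic over the set $\atomstiny$. In that logic, a world $w \subseteq \atomstiny$ is simply a valuation, so $w \modelspl \omega$ holds iff $\omega \in w$ for every simple normative atom $\omega$. Negation and conjunction are then evaluated as usual. The claim I will prove, for all $w \in W^t = 2^{\atomstiny}$ simultaneously, is: $\mtiny, w \modelsmdl \phi$ iff $w \modelspl \phi$.

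\emph{Base case.} Let $\phi = \omega$ with $\omega \in \atomstiny$. By Lemma~\ref{lem:mtiny2}, $\mtiny, w \modelsmdl \omega$ iff $\omega \in w$. By the classical semantics over $\atomstiny$, this holds iff $w \modelspl \omega$. This step uses the fact that $\langtiny$ formulas are also $\langmdl$ formulas, because $F\omega$ abbreviates $O\neg\omega$. Hence $\modelsmdl$ is defined on them, and Lemma~\ref{lem:mtiny2} applies directly. Lemma~\ref{lem:mtiny1} guarantees that $\mtiny$ is a legitimate Chellas model, so the truth relation is the intended one.

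\emph{Inductive steps.} For $\phi = \neg\psi$, the MDL clause gives $\mtiny, w \modelsmdl \neg\psi$ iff $\mtiny, w \not\modelsmdl \psi$. By the induction hypothesis this holds iff $w \not\modelspl \psi$, which holds iff $w \modelspl \neg\psi$. For $\phi = \psi_1 \land \psi_2$, apply the MDL conjunction clause, then the induction hypothesis to each conjunct, then the classical conjunction clause.

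\emph{Main obstacle.} There is essentially none, since the real work is already in Lemma~\ref{lem:mtiny2}. The one point needing care is the induction ordering. The inductive steps recurse only through $\neg$ and $\land$, and every $O$ or $F$ occurs inside a simple normative atom, which the base case handles wholesale. So the induction must be on the outer Boolean structure, treating each $\omega$ as atomic. It must not be on the full modal depth; that nested induction was already carried out inside Lemma~\ref{lem:mtiny2}.
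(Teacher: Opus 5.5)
Your proof is correct and follows the paper's argument: Lemma~\ref{lem:mtiny2} handles the simple normative atoms, and structural induction over the Boolean connectives, whose truth clauses agree in both semantics, does the rest. The appeal to Lemma~\ref{lem:mtiny1} is not needed for this equivalence, since the MDL truth relation is defined on any neighborhood model, but it does no harm.
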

\noindent
Although $\atomstiny$ is infinite, every concrete finite set of formulas contains only finitely
many simple normative atoms. Hence each instance of satisfiability or entailment in $\langtiny$
reduces to classical propositional reasoning over a finite set of atoms.

\section{Lightweight Deontic Logic}
We now extend the previous fragment to a deontic logic that better captures the conditional nature
of normative statements: Lightweight Deontic Logic (LDL). Encoding conditional norms via material
implication is inadequate (see Remark~\ref{remark:monadicVSdyadic}). Instead, LDL assumes
\emph{explicit conditional norms} over atomic propositions, specified directly by the system
designer. By factual detachment, these conditional norms give rise to unconditional obligations and
prohibitions corresponding to the tiny fragment of MDL, yielding the \emph{implicit unconditional
  norms} of the system. To formalize the distinction of explicit vs implicit norms, we introduce the
language $\langl$ below:
\begin{align*}
  \phi   & ::= \omega \mid \neg \phi \mid \phi \land \phi\text{,}                                       \\
  \omega & ::= p \in \props \mid O\omega \mid F\omega \mid \O(\omega|\phi) \mid \F(\omega|\phi)\text{.}
\end{align*}
We read $\O(\omega|\phi)$ as ``$\omega$ is explicitly obligatory given $\phi$'' and
$\F(\omega|\phi)$ as ``$\omega$ is explicitly forbidden given $\phi$''. We call a formula of the
form $\omega ::= p \mid O\omega \mid F\omega \mid \O(\omega|\phi) \mid \F(\omega|\phi)$ a
\emph{normative atom} and denote by $\atoms$ the set of all such atoms. Note that $\atoms$ contains
higher-order norms. Consider $\O(F\,smoking|F\,drinking)$, for example.

\subsection{Propositional Semantics}
Although $\langl$ generalizes $\langtiny$ ($\phi \in \langl$ for each $\phi \in \langtiny$), as
shown below, it can still be characterized by the semantics of classical logic.

The introduced semantics makes a conceptual distinction between explicit and implicit normative
atoms. We say that a given explicit conditional obligation $\O(\omega|\phi)$ or prohibition
$\F(\omega|\phi)$ is \emph{in effect} if its condition $\phi$ is satisfied. This induces the
corresponding implicit obligation $O \omega$ or prohibition $F \omega$ to be true (which concerns
the actual state rather than hypothetical situations).

\begin{defn}\label{def:ldlconsistency}
  An \emph{LDL model} is a propositional valuation $M \subseteq \Omega$ satisfying for all $\omega
    \in \Omega$ and $\phi \in \langl$ the \emph{LDL-consistency} conditions:\,\footnote{We can
    equivalently write $M$ as a Boolean function $\{\omega \mapsto \top \mid \omega \in M\} \cup
      \{\omega \mapsto \bot \mid \omega \not\in M\}$. For $\omega \in \Omega$, we write $M \modelspl
      \omega$ iff $\omega \in M$, or equivalently iff $M(\omega) = \top$.}
  \begin{itemize}
    \item $\O(\omega|\phi) \in M \Longrightarrow M \not\modelspl \phi \text { or } O\omega \in M$, and
    \item $\F(\omega|\phi) \in M \Longrightarrow M \not\modelspl \phi \text { or } F\omega \in M$.
  \end{itemize}
\end{defn}

\begin{defn}\label{def:ldlsemantics}
  Let $\phi \in \langl$ and let $M$ be an LDL model. We write $M \modelsl \phi$ if and only if $M
    \modelspl \phi$.
\end{defn}

Intuitively, the additional consistency requirement allows us to infer (unconditional) implicit
obligations from (conditional) explicit obligations via factual detachment, without strengthening
of the antecedent for explicit obligations, as shown by the example below.
\begin{remark}
  Like in MDL, having a dilemma (i.e. both $O \omega$ and $F \omega$) does not lead to a
  contradiction.
\end{remark}

\begin{ex}
  Consider $\phi = p \land \O(r|p)$ and $M = \{p, \O(r|p), Or\}$. Clearly, $M$ is an LDL model and $M
    \modelspl \phi \land \neg \O(r|p \land q)$. Therefore, $M \modelsl \phi \land \neg \O(r|p \land
    q)$. \end{ex}

\begin{remark}
  $\ldl$ is a conservative extension of the tiny fragment of Chellas' MDL. \end{remark}

\subsection{Axiomatization of $\ldl$}
We define a Hilbert-style proof system for $\ldl$. Its axiom schemata are:
\begin{description}[labelwidth=1.7cm]
  \item[(Prop)] Every instance of a classical propositional tautology over $\atoms$.
  \item[(O-Det)] $\O(\omega|\phi) \land \phi \to O\omega$, for all $\omega \in \atoms$ and $\phi \in \langl$.
  \item[(F-Det)] $\F(\omega|\phi) \land \phi \to F\omega$, for all $\omega \in \atoms$ and $\phi \in \langl$.
\end{description}
The only inference rule is modus ponens: {\bf (MP)} From $\phi$ and $\phi \to \psi$, infer $\psi$.

For $\Gamma \subseteq \langl$ and $\phi \in \langl$, we write $\Gamma \vdashl \phi$ iff there
exists a finite sequence of formulas $\phi_1,\dots,\phi_n \in \langl$ with $\phi_n=\phi$ such that
each $\phi_i$ is either a member of $\Gamma$, an instance of one of the above axiom schemata, or
obtained from earlier formulas in the sequence by \textup{(MP)}.

To prove soundness and completeness, we first provide a lemma connecting semantic consequence in
LDL-theories to that of classical propositional logic. For $\Gamma \subseteq \langl$ and $\phi \in
  \langl$, we write $\Gamma \impliesl \phi$ if every LDL model $M$ satisfying $M \modelsl \Gamma$
also satisfies $M \modelsl \phi$. Similarly, we write $\Gamma \impliespl \phi$ if $M \modelspl
  \phi$ for each $M \modelspl \Gamma$. We abbreviate $\emptyset \impliesl \phi$ by $\impliesl \phi$
and $\emptyset \impliespl \phi$ by $\impliespl \phi$.

\begin{lem}\label{lem:theories}
  Let $\Delta$ be the set of all instances of $\O(\omega|\phi) \land \phi \to O \omega$ and
  $\F(\omega|\phi) \land \phi \to F \omega$. Then for all $\Gamma \subseteq \langl$ and $\phi \in
    \langl$, we have
  \[ \Gamma \impliesl \phi \iff \Gamma \cup \Delta \impliespl \phi\text{.} \]
\end{lem}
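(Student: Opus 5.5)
The plan is to show that the LDL models are exactly the classical valuations $M \subseteq \atoms$ satisfying $\Delta$. Once this is established, the equivalence is immediate. By Definition~\ref{def:ldlsemantics}, LDL truth is classical truth. Then $\Gamma \impliesl \phi$ says that every LDL model of $\Gamma$ satisfies $\phi$, and $\Gamma \cup \Delta \impliespl \phi$ says that every valuation satisfying $\Gamma$ and $\Delta$ satisfies $\phi$. If the two classes of models coincide, so do the two consequence relations.

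\emph{Step 1 (LDL models satisfy $\Delta$).} Let $M$ be an LDL model and take an instance $\O(\omega|\phi) \land \phi \to O\omega$. If $M \modelspl \O(\omega|\phi) \land \phi$, then $\O(\omega|\phi) \in M$ and $M \modelspl \phi$. The first LDL-consistency condition of Definition~\ref{def:ldlconsistency} then forces $O\omega \in M$, so $M \modelspl O\omega$. Otherwise the implication holds trivially. The $\F$ instances are handled the same way using the second condition.

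\emph{Step 2 (valuations satisfying $\Delta$ are LDL models).} Let $M \subseteq \atoms$ with $M \modelspl \Delta$, and suppose $\O(\omega|\phi) \in M$ and $M \modelspl \phi$. Since $M$ satisfies the corresponding instance of $\Delta$, classical semantics gives $M \modelspl O\omega$, i.e.\ $O\omega \in M$. This is exactly the first consistency condition, and the $\F$ case is symmetric. Hence $M$ is an LDL model.

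I expect no real obstacle. The only point needing care is the bookkeeping: the consistency conditions quantify over all $\omega \in \atoms$ and all $\phi \in \langl$, and so does $\Delta$. Every LDL-consistency constraint therefore corresponds one-to-one to an instance in $\Delta$, including higher-order atoms such as $\O(F\,smoking|F\,drinking)$. I will also note that both $\Delta$ and $\Gamma$ may be infinite. This causes no difficulty, because the argument is purely semantic and model-by-model, so no compactness is needed at this stage.
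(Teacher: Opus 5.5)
Your proposal is correct and takes essentially the same route as the paper. Both show that the LDL models are exactly the classical valuations $M \subseteq \atoms$ satisfying $\Delta$, and then read off the equivalence via Definition~\ref{def:ldlsemantics}; the only difference is that you check the two inclusions separately, whereas the paper argues that violating LDL-consistency coincides with falsifying some instance of $\Delta$.
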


\begin{proof}
  We first note that a propositional valuation $M \subseteq \atoms$ is an LDL model iff $M \modelspl
    \Delta$. Indeed, considering only obligations (the prohibition case is analogous), $M$ violates
  LDL-consistency exactly when there exists $\O(\omega | \phi)$ such that $M \modelspl \phi$,
  $O\omega \notin M$, and thus $M \not\modelspl (\O(\omega | \phi) \land \phi \rightarrow O\omega)$.
  Hence, LDL-inconsistency corresponds precisely to falsifying some formula in $\Delta$, and vice
  versa. It follows that the models of $\Gamma \cup \Delta$ are exactly the LDL models of $\Gamma$.
  Therefore, by Definition~\ref{def:ldlsemantics}, $\Gamma \impliesl \phi \iff \Gamma \cup \Delta
    \impliespl \phi$.
\end{proof}

\begin{thm}[Soundness and Completeness] Let $\Gamma, \phi \subseteq \langl$. Then
  \[ \Gamma \vdashl \phi \iff \Gamma \impliesl \phi\text{.}\]
\end{thm}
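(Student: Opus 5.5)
The plan is to reduce both sides of the equivalence to classical propositional consequence over the (infinite) set of normative atoms $\atoms$, using Lemma~\ref{lem:theories} for the semantic side and a direct analysis of $\vdashl$-derivations for the syntactic side. The key observation is that the proof system for $\vdashl$ is exactly a Hilbert system for classical propositional logic over $\atoms$ (axiom (Prop) plus (MP)), enriched with the instances of (O-Det) and (F-Det). Those instances are precisely the set $\Delta$ of Lemma~\ref{lem:theories}. Hence I will show
\[ \Gamma \vdashl \phi \iff \Gamma \cup \Delta \vdashpl \phi, \]
where $\vdashpl$ denotes derivability in a standard complete Hilbert calculus for classical propositional logic whose atoms are the elements of $\atoms$.

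For the left-to-right direction, take an LDL derivation $\phi_1,\dots,\phi_n$. Every step is a member of $\Gamma$, a tautology instance, an element of $\Delta$, or an application of (MP), so the same sequence is a classical derivation from $\Gamma\cup\Delta$. For the converse, I take a classical derivation from $\Gamma\cup\Delta$ that uses (Prop)-style axioms and (MP). Each hypothesis from $\Delta$ is an instance of (O-Det) or (F-Det), so the derivation is already an LDL derivation. Since the (Prop) schema admits \emph{every} tautology over $\atoms$, I do not need to fix a particular axiomatization of classical logic. It suffices to invoke soundness and completeness of propositional logic in the form: $\Gamma\cup\Delta \impliespl \phi$ iff there are finitely many $\psi_1,\dots,\psi_k \in \Gamma\cup\Delta$ with $\impliespl \psi_1\land\dots\land\psi_k\to\phi$.

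Chaining this with Lemma~\ref{lem:theories} gives
\[ \Gamma\vdashl\phi \iff \Gamma\cup\Delta \impliespl\phi \iff \Gamma\impliesl\phi. \]
Concretely, for completeness: if $\Gamma\impliesl\phi$, then $\Gamma\cup\Delta\impliespl\phi$. By the compactness theorem of propositional logic, which applies even though $\atoms$ and $\Gamma\cup\Delta$ are infinite, some finite subset $\{\psi_1,\dots,\psi_k\}$ already entails $\phi$. The formula $\psi_1\to(\psi_2\to\cdots\to(\psi_k\to\phi))$ is a tautology, hence a (Prop) instance, and $k$ applications of (MP), with each $\psi_i$ taken from $\Gamma$ or as an (O-Det)/(F-Det) instance, yield $\phi$. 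For soundness: every (Prop) instance holds in every valuation and hence in every LDL model. Every (O-Det)/(F-Det) instance holds in every LDL model by the consistency conditions of Definition~\ref{def:ldlconsistency}, as in the proof of Lemma~\ref{lem:theories}. (MP) preserves truth in a fixed model. So induction on derivation length gives $\Gamma\impliesl\phi$.

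I expect the main obstacle to be the completeness direction, specifically the appeal to compactness. The paper set up $\props$ as finite, but $\atoms$ is infinite, since normative atoms nest and carry arbitrary conditions. So I must make sure compactness of classical propositional logic over an arbitrary set of atoms is available, and that formulas of $\langl$ really are propositional formulas over $\atoms$: conditions $\phi$ inside $\O(\omega|\phi)$ are part of the atom and are not decomposed. With that in place, the remaining steps are routine bookkeeping.
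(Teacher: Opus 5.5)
Your proposal is correct and matches the paper's proof: both use Lemma~\ref{lem:theories} to turn $\Gamma\impliesl\phi$ into $\Gamma\cup\Delta\impliespl\phi$, and both observe that the Hilbert system is classical propositional logic over $\atoms$ with the instances in $\Delta$ as extra axioms. Your explicit compactness step and the direct soundness induction just spell out what the paper packs into ``soundness and completeness of classical propositional logic.''
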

\begin{proof}
  Let $\Delta$ be defined as in Lemma~\ref{lem:theories}. We know that $\Gamma \impliesl \phi$ iff
  $\Gamma \cup \Delta \impliespl \phi$. By soundness and completeness of classical propositional
  logic, $\Gamma \cup \Delta \impliespl \phi$ iff $\Gamma \cup \Delta \vdashpl \phi$ (where
  $\vdashpl$ is derivability in classical propositional logic over $\Omega$). Finally, consider that
  the Hilbert system is exactly classical propositional logic together with all formulas in $\Delta$
  as additional axiom instances. Therefore, $\Gamma \cup \Delta \vdashpl \phi \iff \Gamma \vdashl
    \phi$. Thus, $\Gamma \impliesl \phi \iff \Gamma \vdashl \phi$.
\end{proof}

Lemma~\ref{lem:theories} yields a polynomial reduction from LDL-satisfiability and LDL-validity to
classical propositional satisfiability and validity over the normative atoms occurring in the input
formula. For $\phi\in\langl$, let $\Delta^\phi$ contain the detachment implications corresponding
to all explicit normative atoms occurring recursively in $\phi$. Then $\phi$ is LDL-satisfiable iff
$\phi\land\bigwedge\Delta^\phi$ is classically satisfiable, and LDL-valid iff
$(\bigwedge\Delta^\phi)\to\phi$ is classically valid. Hence:

\begin{thm}
  Satisfiability in LDL is NP-complete, and validity is co-NP-complete.
\end{thm}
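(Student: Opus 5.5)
We prove the two halves separately: hardness by embedding classical logic into LDL, and membership via the reduction sketched just before the statement, which rests on Lemma~\ref{lem:theories}. The co-NP claim for validity then follows by complementation, since $\phi$ is LDL-valid iff $\neg\phi$ is not LDL-satisfiable. Both statements concern finite formulas, so we measure size in the length of $\phi$.

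\emph{Hardness.} Every classical propositional formula over $\props$ is a formula of $\langl$ that contains no explicit normative atoms. For such a $\phi$, any classical valuation $M\subseteq\props$ is an LDL model, because the LDL-consistency conditions are vacuous when $M$ contains no atom of the form $\O(\cdot|\cdot)$ or $\F(\cdot|\cdot)$. Conversely, every LDL model satisfying $\phi$ is in particular a classical model of $\phi$. Hence LDL-satisfiability restricted to $\props$ is exactly SAT, which gives NP-hardness. Likewise, LDL-validity restricted to $\props$ is exactly TAUT, which gives co-NP-hardness.

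\emph{Membership.} Let $\Delta^\phi$ be the set of detachment implications for all explicit normative atoms $\O(\omega|\psi)$ and $\F(\omega|\psi)$ occurring in $\phi$, including those nested inside other atoms. We must show that $\phi$ is LDL-satisfiable iff $\phi\land\bigwedge\Delta^\phi$ is classically satisfiable.

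The direction from left to right follows from Lemma~\ref{lem:theories}: any LDL model satisfies all of $\Delta$, and $\Delta^\phi\subseteq\Delta$.

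The direction from right to left is the main obstacle, because Lemma~\ref{lem:theories} uses the full infinite $\Delta$, not the finite $\Delta^\phi$. Take a classical model $M$ of $\phi\land\bigwedge\Delta^\phi$. Let $A$ be the finite set of normative atoms occurring (recursively) in $\phi$, and restrict $M$ to $M\cap A$. This restriction still satisfies $\phi$ and $\Delta^\phi$, since $\Delta^\phi$ mentions only atoms in $A$ together with the atoms $O\omega$ and $F\omega$.

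That last point needs care: $O\omega$ may not lie in $A$. So we enlarge $A$ by adding $O\omega$ or $F\omega$ for each explicit atom in $A$. We then keep the assignment on the enlarged set, and set every atom outside it to false. The resulting $M'$ is an LDL model for the following reason. An explicit atom outside $A$ is false in $M'$, so the consistency condition for it holds vacuously. An explicit atom inside $A$ has its condition $\psi$ built from atoms in $A$, so $\psi$ evaluates under $M'$ exactly as it did under $M$, and the corresponding implication in $\Delta^\phi$ secures the required $O\omega$ or $F\omega$. Since LDL models are exactly the classical models of $\Delta$, this finishes the equivalence.

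Finally, $|\Delta^\phi|$ is at most the number of subformulas of $\phi$, and each implication in it has size polynomial in $|\phi|$. The translation is therefore polynomial, so LDL-satisfiability is in NP. Validity is handled dually via $(\bigwedge\Delta^\phi)\to\phi$, or directly by complementation, which places LDL-validity in co-NP.
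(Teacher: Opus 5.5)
Your proposal is correct and follows the paper's route. Like the paper, you reduce LDL-satisfiability to classical satisfiability of $\phi\land\bigwedge\Delta^\phi$ via Lemma~\ref{lem:theories}, but the paper only asserts that equivalence; you actually prove the nontrivial right-to-left direction by restricting the model to the atoms of $\phi$ plus the detached $O\omega$, $F\omega$, and you also supply the hardness half, which the paper leaves implicit.

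One small caveat concerns hardness. The paper stipulates that $\props$ is finite, and SAT over a fixed finite $\props$ is polynomial-time decidable, so restricting to classical formulas over $\props$ does not by itself give NP-hardness. The fix is to encode the $i$-th SAT variable as a distinct implicit atom such as $O^{i}p$ for a fixed $p\in\props$; no LDL-consistency condition constrains such atoms, and the encoding stays polynomial in size.
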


\begin{toappendix}
  \begin{thm}
    \label{Th:conservativity}
    Let $\Gamma \subseteq \langtiny$ and $\phi \in \langtiny$. Then
    \[ \Gamma \impliesmdl \phi \iff \Gamma \impliesl \phi\text{.}\]
  \end{thm}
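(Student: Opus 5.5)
Since $\Gamma \cup \{\phi\} \subseteq \langtiny$, the plan is to route both sides of the equivalence through classical propositional consequence over $\atomstiny$. The target is
\[ \Gamma \impliesmdl \phi \iff \Gamma \impliespl \phi \iff \Gamma \impliesl \phi\text{,} \]
where $\impliesmdl$ is understood as consequence over all Chellas-style neighborhood models. The first equivalence concerns MDL and uses the canonical model $\mtiny$ together with a valuation argument. The second concerns LDL and uses Lemma~\ref{lem:theories}.

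\textbf{Step 1: from MDL to classical logic.} For ($\Rightarrow$), suppose $\Gamma \impliesmdl \phi$ and let $w \subseteq \atomstiny$ with $w \modelspl \Gamma$. By Lemma~\ref{lem:mtiny1}, $\mtiny$ is a Chellas model. By Theorem~\ref{thm:prop}, $\mtiny, w \modelsmdl \Gamma$, hence $\mtiny, w \modelsmdl \phi$, and therefore $w \modelspl \phi$.

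For ($\Leftarrow$), take an arbitrary Chellas model $\M$ and a world $u$ with $\M, u \modelsmdl \Gamma$. Define $w_u = \{\omega \in \atomstiny \mid \M, u \modelsmdl \omega\}$. A routine structural induction on $\langtiny$ shows that $\M, u \modelsmdl \psi$ iff $w_u \modelspl \psi$. This works because formulas of $\langtiny$ are Boolean combinations of simple normative atoms, and MDL evaluates $\neg$ and $\land$ classically. Hence $w_u \modelspl \Gamma$, so $w_u \modelspl \phi$, and thus $\M, u \modelsmdl \phi$.

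\textbf{Step 2: from LDL to classical logic.} By Lemma~\ref{lem:theories}, $\Gamma \impliesl \phi$ iff $\Gamma \cup \Delta \impliespl \phi$, where classical consequence is now taken over the larger atom set $\atoms$. It remains to show that $\Gamma \cup \Delta \impliespl \phi$ iff $\Gamma \impliespl \phi$ over $\atomstiny$.

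The direction from $\Gamma \impliespl \phi$ is immediate. Any $\atoms$-valuation satisfying $\Gamma \cup \Delta$ restricts to an $\atomstiny$-valuation satisfying $\Gamma$, and the truth value of $\phi$ depends only on atoms in $\atomstiny$.

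For the converse, let $w \subseteq \atomstiny$ with $w \modelspl \Gamma$ and $w \not\modelspl \phi$. We extend $w$ to an LDL model $M$ that agrees with $w$ on $\atomstiny$. The plan is to set $M = w \cup X$, where $X$ consists of those atoms of $\atoms \setminus \atomstiny$ that the constraints $\Delta$ never force.

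\textbf{Main obstacle: the extension.} The delicate point is that $\atoms$ also contains atoms such as $O\,\O(p|q)$, which are not in $\atomstiny$, so the construction of $M$ must respect the detachment constraints on every level. The simplest choice is $X = \emptyset$, i.e., make every explicit atom $\O(\cdot|\cdot)$ and $\F(\cdot|\cdot)$ false. Then each instance in $\Delta$ has a false antecedent and so holds vacuously. Every other atom outside $\atomstiny$, such as $O\,\O(p|q)$, is simply set false, which is harmless because such atoms occur only in consequents of $\Delta$ whose antecedents are already false. The resulting $M$ is an LDL model, satisfies $\Gamma$, and falsifies $\phi$. This yields $\Gamma \not\impliesl \phi$ and completes the chain of equivalences.
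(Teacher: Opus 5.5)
Your proof is correct and uses the same constructions as the paper. These are extracting the $\atomstiny$-valuation of an MDL pointed model, which is trivially an LDL model because it contains no explicit atoms, and restricting an LDL model to $\atomstiny$ and then invoking $\mtiny$ via Theorem~\ref{thm:prop}. You merely factor both directions through classical consequence over $\atomstiny$ and Lemma~\ref{lem:theories}, where the paper argues each direction directly by contraposition. One small inaccuracy, which does not affect correctness: atoms like $O\,\O(p|q)$ can also occur in the antecedents of $\Delta$-instances (inside conditions), but your argument still goes through, since every antecedent contains an explicit conjunct $\O(\omega|\psi)$ or $\F(\omega|\psi)$ that is false in $M$.
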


  \begin{proof}
    We prove both directions by contraposition.

    Let $\Gamma \not\impliesmdl \phi$. Then there is a pointed MDL model $(\M,w)$ such that
    $\M,w\modelsmdl\Gamma$ and $\M,w\not\modelsmdl\phi$. Put
    \[
      u=\{\omega\in\atomstiny\mid\M,w\modelsmdl\omega\}.
    \]
    Since formulas in $\langtiny$ are Boolean combinations of atoms in $\atomstiny$, structural
    induction gives $u\modelspl\Gamma$ and $u\not\modelspl\phi$. Moreover, $u\subseteq\atoms$ is an LDL
    model because it contains no explicit normative atoms. Thus $u\modelsl\Gamma$ and
    $u\not\modelsl\phi$, so $\Gamma \not\impliesl \phi$.

    Conversely, let $\Gamma \not\impliesl \phi$. Then there is an LDL model $M$ with $M \modelsl
      \Gamma$ and $M \not\modelsl \phi$. Hence $M \not\modelspl \phi$. Let $M' = M \cap \atomstiny$.
    Since $\Gamma \subseteq \langtiny$ and $\phi \in \langtiny$, their truth depends only on the atoms
    in $\atomstiny$. Hence $M' \modelspl \Gamma$ and $M' \not\modelspl \phi$. By
    Theorem~\ref{thm:prop}, $\mtiny,M' \modelsmdl \Gamma$ and $\mtiny,M' \not\modelsmdl \phi$.
    Therefore, $\Gamma \not\impliesmdl \phi$.
  \end{proof}
\end{toappendix}

\section{A Doxastic Extension of LDL}
\label{sec:doxastic}
We now define a direct combination of LDL with multi-agent doxastic logic
KD45~\cite{Hintikka1962,DBLP:journals/ai/HalpernM92}. This allows us to express norms conditioned
on what agents believe and what they believe to be possible, as well as to express beliefs about
norms.

Let $\agents$ be a finite set of agents. The language $\langb$ is defined by the following grammar,
where $p \in \props$ and $i \in \agents$:
\[
  \begin{aligned}
    \phi   & ::= \omega \mid B_i\phi \mid \neg \phi \mid \phi \land \phi,               \\
    \omega & ::= p \mid O\omega \mid F\omega \mid \O(\omega|\phi) \mid \F(\omega|\phi).
  \end{aligned}
\]
We read formulas $B_i \phi$ as \emph{``agent $i$ believes $\phi$''} and define $\hat{B}_i \phi$ as
an abbreviation for $\neg B_i \neg \phi$ (which can be read ``agent $i$ believes it possible that
$\phi$''). We further call formulas generated by $\omega$ \emph{objective atoms} and write
$\atomsb$ for the set of all such atoms. An important difference from LDL is that conditions of
explicit obligations and prohibitions may now contain belief operators. The normative contents of
obligations and prohibitions, however, are still restricted to formulas generated by $\omega$ (in
particular, we cannot model the obligation or prohibition to have a certain belief). Moreover, in
$\langb$, we can also specify beliefs about normative facts, such as $B_iO\omega$, $B_iF\omega$, or
$B_i\O(\omega|\phi)$.

\subsection{Kripke Semantics over LDL-Valuations}
The $\ldlb$ semantics combines doxastic Kripke models with LDL-consistency.

\begin{defn}
  An \emph{$\ldlb$ premodel} is a tuple
  \[
    \M=(W,(R_i)_{i\in\agents},V)
  \]
  where $W$ is a non-empty set of worlds, $R_i\subseteq W\times W$ is agent $i$'s doxastic
  \emph{accessibility relation}, and $V:W\to 2^{\atomsb}$ assigns an LDL-style valuation to each
  world. For $x\in W$, write $R_i(x)=\{y\in W\mid xR_i y\}$. Truth in a premodel is defined as
  follows:
  \begin{align*}
     & \M,w\modelsk \omega
     &                            & \text{ iff }\omega\in V(w),                                       \\
     & \M,w\modelsk B_i\phi
     &                            & \text{ iff }\M,v\modelsk\phi\text{ for all }v\text{ with }wR_i v, \\
     & \M,w\modelsk \neg\phi
     &                            & \text{ iff }\M,w\not\modelsk\phi,                                 \\
     & \M,w\modelsk \phi\land\psi
     &                            & \text{ iff }\M,w\modelsk\phi\text{ and }\M,w\modelsk\psi.
  \end{align*}

  A premodel is an $\ldlb$ model iff it satisfies the following two conditions.
  \begin{enumerate}[label=(\roman*)]
    \item For every $i \in \agents$, $R_i$ is \emph{serial}, \emph{transitive}, and \emph{euclidean}.
    \item For every $w\in W$, every $\omega\in\atomsb$, and every $\phi\in\langb$,
          \begin{align*}
            \O(\omega|\phi)\in V(w) \text{ and } \M,w\modelsk\phi
             & \quad\Longrightarrow\quad O\omega\in V(w), \\
            \F(\omega|\phi)\in V(w) \text{ and } \M,w\modelsk\phi
             & \quad\Longrightarrow\quad F\omega\in V(w).
          \end{align*}
  \end{enumerate}

\end{defn}

The first condition gives the doxastic logic KD45 for each agent: beliefs are consistent; that is,
$B_i\phi \to \hat{B}_i \phi$ is valid. Furthermore, agents have positive and negative
introspection: $B_i \phi \to B_i B_i \phi$ and $\neg B_i \phi \to B_i \neg B_i
  \phi$.\footnote{Positive and negative introspection are controversial assumptions. In the case of
  artificial agents, however, they can be justified under the assumption that each agent has a
  consistent belief base, full access, and the ability to reason perfectly about it.}

The second condition enforces \emph{LDL-consistency} at each individual world. This effectively
lifts the factual detachment of LDL to the mental states of the agents: If an agent believes (or
believes it possible) that at the same time an explicit norm is given and its condition is
satisfied, then the agent believes (or believes it possible) that the norm is in effect.

For $\Gamma\subseteq\langb$ and $\phi \in \langb$, we write $\Gamma\impliesk\phi$ iff every world
of every $\ldlb$ model satisfying all formulas in $\Gamma$ also satisfies $\phi$.

\begin{ex}\label{ex:vehicle-ldlb}
  Let $r$ mean that an autonomous car moves to the right to yield (e.g., if an emergency vehicle is
  approaching), and $c$ that a cyclist is to the right of the car. The language distinguishes $\F(r|
    B_a c)$ from $\F(r| \hat{B}_a c)$; only the latter captures ``if a cyclist cannot be ruled out,
  moving right is forbidden'', that is, $$\{\F(r| \hat{B}_a c),\hat{B}_a c\}\impliesk F r \quad \mbox{while} \quad
    \{\F(r| B_a c),\hat{B}_a c\}\not\impliesk F r.$$

  We can also reason about an agent's own perspective (i.e., $a$ believes both the norm $\F(r|
    \hat{B}_a c)$ and that $c$ is possible): $\{B_a\F(r| \hat{B}_a c),\hat{B}_a c\} \impliesk B_a F r$
  (via $B_a \hat{B}_a c$ by introspection), but $\{\F(r| \hat{B}_a c),\hat{B}_a c\}\not\impliesk B_a
    F r.$

\end{ex}

\begin{remark}
  $\ldlb$ can be axiomatized by adding the LDL detachment axioms to multi-agent KD45. The additional
  axiom schemata are:
  \begin{description}[labelwidth=1.7cm]
    \item[(O-Det)] $\O(\omega|\phi)\land\phi\to O\omega$, for all $\omega\in\atomsb$ and $\phi\in\langb$.
    \item[(F-Det)] $\F(\omega|\phi)\land\phi\to F\omega$, for all $\omega\in\atomsb$ and $\phi\in\langb$.
  \end{description}
\end{remark}

\begin{toappendix}
  The full axiomatization for $\ldlb$ is given as follows:

  \begin{description}[labelwidth=1.7cm]
    \item[(Prop)] Every instance of a classical propositional tautology.
    \item[(K)] $B_i(\phi\to\psi)\to(B_i\phi\to B_i\psi)$, for all $i\in\agents$.
    \item[(D)] $B_i\phi\to \hat B_i\phi$, for all $i\in\agents$.
    \item[(4)] $B_i\phi\to B_iB_i\phi$, for all $i\in\agents$.
    \item[(5)] $\neg B_i\phi\to B_i\neg B_i\phi$, for all $i\in\agents$.
    \item[(O-Det)] $\O(\omega|\phi)\land\phi\to O\omega$, for all $\omega\in\atomsb$ and $\phi\in\langb$.
    \item[(F-Det)] $\F(\omega|\phi)\land\phi\to F\omega$, for all $\omega\in\atomsb$ and $\phi\in\langb$.
  \end{description}
  The rules are modus ponens and necessitation:
  \begin{description}[labelwidth=1.7cm]
    \item[(MP)] From $\phi$ and $\phi\to\psi$, infer $\psi$.
    \item[(Nec)] From a theorem $\phi$, infer $B_i\phi$, for all $i\in\agents$.
  \end{description}

  We write $\vdashk\phi$ for theoremhood in this system. For $\Gamma\subseteq\langb$, we write
  $\Gamma\vdashk\phi$ iff there are $\gamma_1,\dots,\gamma_n\in\Gamma$ such that
  \[
    \vdashk (\gamma_1\land\cdots\land\gamma_n)\to\phi,
  \]
  where $n=0$ means $\vdashk\phi$.

  \begin{thm}\label{thm:ldlb-completeness}
    Let $\Gamma\subseteq\langb$ and $\phi\in\langb$. Then
    \[
      \Gamma\vdashk\phi \quad\text{iff}\quad \Gamma\impliesk\phi\text{.}
    \]
  \end{thm}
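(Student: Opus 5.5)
We prove soundness and completeness of the LDLB axiom system separately; completeness is done by the standard canonical model construction for multi-agent KD45, adapted so that valuations are LDL-consistent.

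\emph{Soundness.} Since $\Gamma\vdashk\phi$ means $\vdashk(\gamma_1\land\dots\land\gamma_n)\to\phi$, it suffices to show that every theorem is valid in all $\ldlb$ models. We argue by induction on derivations.
\begin{itemize}
\item (Prop) instances are valid because the Boolean clauses of $\modelsk$ are classical.
\item (K) is valid in any Kripke premodel.
\item (D), (4) and (5) are valid because each $R_i$ is serial, transitive and euclidean.
\item (O-Det) and (F-Det) are valid by condition (ii). If $\M,w\modelsk\O(\omega|\phi)\land\phi$, then $\O(\omega|\phi)\in V(w)$ and $\M,w\modelsk\phi$, so $O\omega\in V(w)$, i.e.\ $\M,w\modelsk O\omega$. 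The (F-Det) case is the same.
\item (MP) preserves validity at each world, and (Nec) preserves validity across all worlds of all models.
\end{itemize}
Hence if $\Gamma\vdashk\phi$, any world satisfying $\Gamma$ satisfies each $\gamma_j$ and therefore $\phi$.

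\emph{Completeness.} We argue by contraposition: assume $\Gamma\not\vdashk\phi$, so $\Gamma\cup\{\neg\phi\}$ is consistent in the sense of the finite-conjunction definition. By the usual Lindenbaum construction (the language is countable), it extends to a maximal consistent set $\Sigma_0$. We then build the canonical premodel $\M^c=(W^c,(R^c_i),V^c)$ with the following components.
\begin{itemize}
\item $W^c$ is the set of all maximal consistent sets.
\item $xR^c_i y$ iff $\{\psi\mid B_i\psi\in x\}\subseteq y$.
\item $V^c(x)=x\cap\atomsb$.
\end{itemize}
The standard existence lemma holds: if $\neg B_i\psi\in x$, then $\{\chi\mid B_i\chi\in x\}\cup\{\neg\psi\}$ is consistent. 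Its proof uses only (K), (Nec), (Prop) and (MP), all of which are available. The truth lemma $\M^c,x\modelsk\psi$ iff $\psi\in x$ then follows by induction on $\psi$. For objective atoms this holds by the definition of $V^c$; note in particular that for atoms such as $\O(\omega|\chi)$ whose condition contains belief operators, atomic truth is just membership, so the induction does not need to look inside $\chi$. The Boolean and $B_i$ cases are the standard ones. From (D), (4) and (5), the canonical relations are serial, transitive and euclidean by the textbook argument for KD45, so condition (i) holds.

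The step specific to this logic, and the one to check carefully, is condition (ii). Suppose $\O(\omega|\chi)\in V^c(x)$ and $\M^c,x\modelsk\chi$. By the truth lemma, $\chi\in x$. The (O-Det) instance is in $x$, so $O\omega\in x$ by closure of maximal consistent sets under (MP), and hence $O\omega\in V^c(x)$. The prohibition case is symmetric.

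The subtle point is circularity: condition (ii) refers to the truth relation, while the truth lemma is stated for premodels. This is harmless because the truth relation of a premodel is defined independently of (i) and (ii). So we first prove the truth lemma for $\M^c$ as a premodel, and only afterwards verify (i) and (ii). Thus $\M^c$ is an $\ldlb$ model and $\M^c,\Sigma_0\modelsk\Gamma\cup\{\neg\phi\}$, giving $\Gamma\not\impliesk\phi$.
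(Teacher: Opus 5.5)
Your proposal is correct and follows the paper's proof essentially step for step. Soundness comes from checking the axioms, with detachment valid by condition (ii), and completeness from the canonical multi-agent KD45 model over the objective atoms with $V(x)=x\cap\atomsb$, where the truth lemma is used to verify the detachment condition. Your explicit remarks on the existence lemma, and on proving the truth lemma for the premodel before checking (i) and (ii), just spell out what the paper calls ``standard''.
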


  \begin{proof}
    Soundness is immediate. The propositional and KD45 axioms are valid on serial, transitive, and
    euclidean frames, and the two detachment axioms are valid by the second condition in the definition
    of $\ldlb$ models. The rules \textup{(MP)} and \textup{(Nec)} preserve validity, and the
    finite-premise definition of $\Gamma\vdashk\phi$ gives soundness for local consequence from
    theories.

    For completeness, use the standard canonical model construction for multi-agent KD45, but with
    atoms $\atomsb$. Let $X$ be the set of all maximal $\ldlb$-consistent sets. Define
    \[
      xR_i y \quad\text{iff}\quad \{\psi \mid B_i\psi\in x\}\subseteq y,
      \qquad
      V(x)=x\cap\atomsb.
    \]

    We first check the frame conditions. Seriality follows from axiom \textup{(D)}. Indeed, for any
    $x\in X$, the set
    \[
      \Delta_i(x)=\{\psi\mid B_i\psi\in x\}
    \]
    is consistent: otherwise, for some $\psi_1,\dots,\psi_n\in\Delta_i(x)$, we would have $\vdashk
      (\psi_1\land\cdots\land\psi_n)\to\bot$. By necessitation and \textup{(K)}, $\vdashk
      B_i\psi_1\land\cdots\land B_i\psi_n\to B_i\bot$, so $B_i\bot\in x$. But \textup{(D)} gives
    $B_i\bot\to\hat B_i\bot$, while necessitation gives $B_i\top$; contradiction. Hence $\Delta_i(x)$
    extends to some $y\in X$, and $xR_i y$.

    Transitivity follows from \textup{(4)}. If $xR_i y$ and $yR_i z$, and $B_i\psi\in x$, then
    $B_iB_i\psi\in x$, so $B_i\psi\in y$, and therefore $\psi\in z$. Thus $xR_i z$.

    Euclideanness follows from \textup{(5)}. If $xR_i y$ and $xR_i z$, and $B_i\psi\in y$, then
    $B_i\psi\in x$; otherwise $\neg B_i\psi\in x$, so by \textup{(5)}, $B_i\neg B_i\psi\in x$, whence
    $\neg B_i\psi\in y$, a contradiction. Since $B_i\psi\in x$ and $xR_i z$, we get $\psi\in z$. Thus
    $yR_i z$.

    The standard canonical-model argument for normal modal logics gives the truth lemma:
    \[
      \M^c,x\modelsk\chi \quad\text{iff}\quad \chi\in x
    \]
    for every $\chi\in\langb$.

    It remains only to check the LDLB detachment condition. Suppose $\O(\omega|\psi)\in V(x)$ and
    $\M^c,x\modelsk\psi$. By the truth lemma, $\psi\in x$. Since
    \[
      \O(\omega|\psi)\land\psi\to O\omega
    \]
    is an axiom instance, it belongs to $x$; hence $O\omega\in x$, and therefore $O\omega\in V(x)$. The
    proof for prohibitions is identical. Thus the canonical model is an $\ldlb$ model.

    Now assume $\Gamma\not\vdashk\phi$. Then $\Gamma\cup\{\neg\phi\}$ is $\ldlb$-consistent; otherwise
    some finite subset of $\Gamma$ would prove $\phi$. Extend $\Gamma\cup\{\neg\phi\}$ to a maximal
    consistent set $x\in X$. By the truth lemma, $\M^c,x\modelsk\Gamma$ and $\M^c,x\not\modelsk\phi$.
    Hence $\Gamma\not\impliesk\phi$. This proves completeness.
  \end{proof}
\end{toappendix}

\section{Optimizing Norm Violations}
We now turn to decision making itself. An acting agent $a \in\agents$ must choose among several
possible decisions, each of which may violate some of the applicable norms, each having a possibly
different weight. We formalize this as the Doxastic Norm Compliance Optimization Problem (DNCOP):
given a finite description of the facts, the agents' beliefs, and a set of weighted norms, find a
decision that minimizes total violation weight.

Two perspectives are possible. The objective perspective evaluates violations against the full,
true state of the world; the subjective perspective evaluates them against what the acting agent
believes. We provide conditions under which these two agree and under which optimal decisions can
be found by a polynomial reduction to weighted partial MaxSAT~\cite{DBLP:conf/sat/AnsoteguiBL09}.

\subsection{State Vocabulary}\label{sec:vocabulary}
A DNCOP is described using a fixed, finite set of atoms -- the \emph{state vocabulary}. Besides
plain facts, we need a compact way to talk about nested beliefs. Following restricted modal
literals and proper epistemic knowledge
bases~\cite{DBLP:conf/ecai/LakemeyerL12,DBLP:conf/aaai/MuiseBFMMPS15}, beliefs are represented only
by nested belief operators applied to a single objective atom.

As agents' beliefs are introspective, a nested belief such as $B_1 B_1 B_2 \neg B_2 B_2 B_1 \omega$
is LDLB-equivalent to $B_1 \neg B_2 B_1 \omega$: repeated operators for the same agent collapse. We
therefore only need belief chains that alternate agents.

\begin{defn}\label{def:rdda}\label{def:finite-vocabulary}
  A \emph{restricted doxastic-deontic atom (RDDA)} is either an objective atom $\omega\in\atomsb$, or
  a formula $X_1\ldots X_n\omega$ ($n \geq 1$), where each $X_j$ is either $B_{a_j}$ or
  $B_{a_j}\neg$, consecutive operators refer to different agents ($a_j\neq a_{j+1}$), and $\omega$ is
  an objective atom. We denote the set of all RDDAs by $\rdda$.

  A finite set $\statevocab$ of such atoms is a \emph{state vocabulary} if it is closed under
  sub-atoms, i.e., (let $\mathsf{Bool}(\statevocab)$ denote all Boolean combinations over members of
  $\statevocab$)
  \begin{enumerate}[label=(\roman*)]
    \item if $B_i \alpha \in \statevocab$ or $B_i \neg \alpha \in \statevocab$ (for any RDDA $\alpha$), then
          $\alpha \in \statevocab$,
    \item if $O\omega\in\statevocab$, $F\omega\in\statevocab$, $\O(\omega|\phi)\in\statevocab$, or
          $\F(\omega|\phi)\in\statevocab$, then $\omega\in\statevocab$; and
    \item if $\O(\omega|\phi)\in\statevocab$ or $\F(\omega|\phi)\in\statevocab$, then
          $\phi\in\mathsf{Bool}(\statevocab)$.
  \end{enumerate}

  Finally, we denote by $\siga$ the subset of $\statevocab$ consisting of RDDAs that have $B_a$ as
  their outermost operator (i.e., the beliefs of the acting agent).
\end{defn}

A state vocabulary therefore contains the atoms needed to finitely describe a norm compliance
optimization problem. Later, these atoms will be classified as fixed by the environment or
controlled by the acting agent.

\subsection{Doxastic Norm Compliance Optimization Problems (DNCOPs)}\label{sec:dncop}
We formulate the optimization problem faced by a fixed acting agent $\actagent\in\agents$. The goal
is to choose a decision that minimizes (weighted) norm violations.

The state vocabulary $\statevocab$ is partitioned into atoms fixed true by the environment
$\trueatoms$ (e.g., whether an emergency vehicle is approaching), fixed false $\falseatoms$, and
$\dbar$ atoms under the agent's control (its decisions and their knowledge\footnote{Since LDLB does
  not have knowledge, we use true beliefs.}).
\begin{defn}\label{def:dncop}
  A \emph{DNCOP} is a tuple $\Delta=\dncoptuple$, where
  \begin{itemize}
    \item $\statevocab$ is a finite state vocabulary,
    \item $\trueatoms\subseteq\statevocab$ (resp. $\falseatoms\subseteq\statevocab$) contains the atoms fixed
          true (resp. false) independently of the decision the agent makes,
    \item $\decatoms\subseteq\atomsb$ is the set of \emph{decision atoms} the acting agent sets directly, and
          we write $\dbar = \decatoms\cup \{B_{\actagent}\delta,
            B_{\actagent}\neg\delta\mid\delta\in\decatoms\}$ for these atoms together with the agent's
          (automatically correct) beliefs about them;
    \item $\decconstraint\in\mathsf{Bool}(\decatoms)$ is the \emph{decision constraint}, defining the
          possible decisions\\ $ \decisions{\Delta} = \{d\subseteq\decatoms\mid d\modelspl\decconstraint\} $
    \item $\countednorms\subseteq\statevocab$ is the set of \emph{relevant norms} (of the form
          $\O(\omega|\phi)$, $\F(\omega|\phi)$, $O\omega$, $F\omega$).
    \item $v$ is a function that assigns a positive integer weight to each norm $n \in \countednorms$.
  \end{itemize}
  We require that $\statevocab=\trueatoms\mathbin{\dot\cup}\falseatoms\mathbin{\dot\cup}\dbar $ is a
  disjoint union.
\end{defn}

Unless specified otherwise, we assume $\decconstraint=\top$, so every subset of $\decatoms$ is a
possible decision. We also omit $\statevocab$ when clear from the context.

\begin{ex}\label{ex:vehicle-factual-specification}
  Let $r$ and $c$ be as in Example~\ref{ex:vehicle-ldlb}, and let $e$ denote that an emergency
  vehicle is approaching from behind. Consider the norms $\O(r|e)$ (``move right if an emergency
  vehicle approaches'') and $\F(r|c)$ (``don't move right'', once a cyclist is present). Fix $e$ and
  the agent's correct belief in it; leave open whether $c$ is fixed true or false. Assume $v(\O(r|e))
    = 1$ and $v(\F(r | c)) = 2$. We define two variants of the scenario, one where no cyclist is
  present ($c$ is false and fixed) and one where a cyclist is present ($c$ is true and fixed). We
  define the DNCOP where there is a cyclist as $\Delta_c$ (with $\trueatoms_c$, $\falseatoms_c$), and
  the DNCOP without a cyclist as $\Delta_{\neg c}$ (with $\trueatoms_{\neg c}$, $\falseatoms_{\neg
      c}$). For both versions $\decatoms =\{r\}$, and $\countednorms=\{\O(r|e), \F(r | c)\}$.
  \begin{align*}
    \trueatoms_{\neg c} & =
    \{e,\O(r|e),\F(r|c), Or, B_{\actagent}e,B_{\actagent}\O(r|e),B_{\actagent}\F(r|c)\},\
    \falseatoms_{\neg c} =
    \{c,Fr,B_{\actagent}c,B_{\actagent}\neg c\}, \\
    \trueatoms_{c}      & =
    \{e,c,\O(r|e),\F(r|c),Or,Fr, B_{\actagent}e,B_{\actagent}\O(r|e),B_{\actagent}\F(r|c)\},\
    \falseatoms_{c} =
    \{B_{\actagent}c,B_{\actagent}\neg c\}.
  \end{align*}

  In either case, we assume that $a$ believes an emergency vehicle is approaching from behind and
  that both $\O(r|e)$ and $\F(r|c)$ hold, and therefore that $Or$ holds. In $\Delta_c$ also $Fr$
  holds. All these beliefs are \emph{true beliefs}.

  By the default $\decconstraint=\top$, the possible decisions are $ \decisions{\Delta_{\neg c}} =
    \decisions{\Delta_c} = \{\emptyset,\{r\}\} $, where $\emptyset$ represents not moving right and
  $\{r\}$ represents moving right.

\end{ex}

\subsection{Compatible Models and Violation Counts}
\label{sec:compatible}
Given a decision $d$ in a DNCOP $\Delta$, we can ask which models of the LDLB logic are consistent
with it -- either assuming full, correct information (\emph{objective compatibility}) or only what
the acting agent believes (\emph{subjective compatibility}).

\begin{defn}\label{def:decision-description}
  Let $\Delta=\dncoptuple$ be a DNCOP and $d\subseteq\decatoms$. Let
  \begin{itemize}
    \item $\choice{\Delta}{d} = \bigcup_{\delta\in d}\{\delta,B_{\actagent}\delta,\neg
            B_{\actagent}\neg\delta\} \cup \bigcup_{\delta\in\decatoms\setminus d}\{\neg\delta,\neg
            B_{\actagent}\delta,B_{\actagent}\neg\delta\}$, \\ (adding $B_a$ because the agent believes their
          decision and thus also $\hat{B}_a$).
    \item $\objdesc{\Delta}{d} = \trueatoms \cup\{\neg\alpha\mid\alpha\in\falseatoms\}
            \cup\choice{\Delta}{d}$
    \item $\subjdesc{\Delta}{d} = (\trueatoms \cap \siga) \cup \{\neg \alpha \mid \alpha \in \falseatoms \cap
            \siga \} \cup\choice{\Delta}{d}$\\ i.e., $\{$only the acting agent's own beliefs about $\trueatoms$
          and $\falseatoms \} \cup\choice{\Delta}{d}$.
  \end{itemize}
\end{defn}

We characterize the LDLB-models $(\mathcal M, w)$ that are subjectively and objectively compatible
with a DNCOP $\Delta$ and a decision $d$. For the subjective perspective, we assume that the acting
agent trusts their own beliefs. To achieve this, we restrict ourselves to models with
$wR_{\actagent}w$ for the actual world $w$.\footnote{This ensures that $(\mathcal M, w) \modelsk
    \phi$ whenever $(\mathcal M, w) \modelsk B_{\actagent} \phi$. }

\begin{defn}\label{def:realization}
  A pointed $\ldlb$ model $(\M,w)$ is \emph{objectively compatible} with $(\Delta, d)$ if
  $\M,w\modelsk\objdesc{\Delta}{d}$. It is \emph{subjectively compatible} with $(\Delta, d)$ if
  $\M,w\modelsk\subjdesc{\Delta}{d}$ and the agent trusts itself at $w$, i.e., $wR_{\actagent}w$.

  We denote the corresponding sets of models by $\objcomp{\Delta}{d}$ and $\subjcomp{\Delta}{d}$.
  Nonemptiness of the respective set defines objective or subjective admissibility.
\end{defn}

In some DNCOPs none of the decisions are objectively admissible (due to inconsistencies in beliefs,
norms, or both). In other DNCOPs, only specific decisions are. Consider an example from the
perspective of the designer, where $p \in \trueatoms$ is fixed true and $Ox, \O(x|p)\in \decatoms$
are decision atoms. By detachment, making $\O(x|p)$ true requires making $O x$ true as well. In
this example $\{\O(x|p), Ox\}$ is objectively admissible but $\{\O(x|p)\}$ is not. Since facts in
$\trueatoms$ are inaccessible to the acting agent, the objective admissibility of $\{\O(x|p)\}$
cannot be determined subjectively (and in particular is not implied by subjective admissibility).

Hence, we restrict our attention to \emph{decision-sound} DNCOPs. This strong but necessary promise
allows optimization from the agent's perspective.

\begin{defn}\label{def:decision-soundness}
  We call a DNCOP $\Delta$ \emph{decision-sound} if $\decisions{\Delta}\neq\emptyset$ and every
  $d\in\decisions{\Delta}$ is objectively admissible.
\end{defn}

Importantly, as shown below, every objectively admissible decision is also subjectively admissible.
This justifies the use of subjectively compatible models when optimizing from the subjective
perspective.

\begin{prop}\label{prop:objective-subjective-admissibility}
  Let $\Delta=\dncoptuple$ and $d\subseteq\decatoms$. If $d$ is objectively admissible, then it is
  subjectively admissible.
\end{prop}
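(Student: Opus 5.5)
The plan is to keep the model $\M$ and move the point. Suppose $(\M,w)\in\objcomp{\Delta}{d}$. By seriality of $R_{\actagent}$, pick some $u$ with $wR_{\actagent}u$. I will show that $(\M,u)\in\subjcomp{\Delta}{d}$. Since $\M$ is already an $\ldlb$ model, this only requires checking that $uR_{\actagent}u$ and $\M,u\modelsk\subjdesc{\Delta}{d}$.

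\emph{Step 1 (frame facts).} Because $R_{\actagent}$ is euclidean, $wR_{\actagent}u$ and $wR_{\actagent}u$ give $uR_{\actagent}u$. Hence the agent trusts itself at $u$.

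Next I claim $R_{\actagent}(u)=R_{\actagent}(w)$.
\begin{itemize}
\item Transitivity gives $R_{\actagent}(u)\subseteq R_{\actagent}(w)$.
\item Euclideanness applied to $wR_{\actagent}u$ and $wR_{\actagent}x$ gives $uR_{\actagent}x$, so $R_{\actagent}(w)\subseteq R_{\actagent}(u)$.
\end{itemize}
Consequently, for every $\psi\in\langb$, $\M,w\modelsk B_{\actagent}\psi$ iff $\M,u\modelsk B_{\actagent}\psi$. The same holds for the negations $\neg B_{\actagent}\psi$.

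\emph{Step 2 (transfer of the agent's beliefs).} Every element of $\siga$ has $B_{\actagent}$ as its outermost operator; this includes atoms of the form $B_{\actagent}\neg\alpha$. So every member of $(\trueatoms\cap\siga)\cup\{\neg\alpha\mid\alpha\in\falseatoms\cap\siga\}$ is of the form $B_{\actagent}\psi$ or $\neg B_{\actagent}\psi$. These members belong to $\objdesc{\Delta}{d}$, hence hold at $w$, and therefore hold at $u$ by Step 1. The same argument covers the belief literals in $\choice{\Delta}{d}$, namely $B_{\actagent}\delta$, $\neg B_{\actagent}\neg\delta$, $\neg B_{\actagent}\delta$ and $B_{\actagent}\neg\delta$.

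\emph{Step 3 (the factual decision literals).} This is the only point needing care, since $\delta$ and $\neg\delta$ in $\choice{\Delta}{d}$ are not belief formulas and need not transfer from $w$ to $u$ in general. Here I use the choice description itself.
\begin{itemize}
\item For $\delta\in d$, we have $\M,w\modelsk B_{\actagent}\delta$ and $wR_{\actagent}u$, so $\M,u\modelsk\delta$.
\item For $\delta\in\decatoms\setminus d$, we have $\M,w\modelsk B_{\actagent}\neg\delta$, so $\M,u\modelsk\neg\delta$.
\end{itemize}
Thus all of $\subjdesc{\Delta}{d}$ holds at $u$, and $(\M,u)\in\subjcomp{\Delta}{d}$. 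Hence $d$ is subjectively admissible.

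The main obstacle I expect is exactly this mismatch between $w$ and the new point. Simply reusing $w$ fails because $wR_{\actagent}w$ need not hold. Moving to a successor $u$ loses the objective (non-belief) facts, and Step 3 handles this. The design of $\choice{\Delta}{d}$, which includes $B_{\actagent}\delta$ and $B_{\actagent}\neg\delta$, is what guarantees that the only non-doxastic conjuncts of $\subjdesc{\Delta}{d}$ are forced at every $R_{\actagent}$-successor of $w$.
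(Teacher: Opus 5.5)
Your proof is correct and follows the paper's argument step for step. You move to an $R_{\actagent}$-successor given by seriality, and use transitivity and euclideanness to obtain reflexivity there and $R_{\actagent}(u)=R_{\actagent}(w)$, so all $B_{\actagent}$-rooted literals transfer; the conjuncts $B_{\actagent}\delta$ and $B_{\actagent}\neg\delta$ holding at $w$ then force the decision literals $\delta$ and $\neg\delta$ at $u$.
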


\begin{proof}
  Choose $(\M,w)\in\objcomp{\Delta}{d}$. By seriality, $wR_{\actagent}v$ for some $v$. Transitivity
  and euclideanness give $R_{\actagent}(w)=R_{\actagent}(v)$ and $vR_{\actagent}v$. Hence every RDDA
  whose outermost agent is $\actagent$ has the same truth value at $w$ and $v$, so the non-decision
  part of $\subjdesc{\Delta}{d}$ holds at $v$. Because $w$ and $v$ have the same
  $\actagent$-successors, all belief literals in $\choice{\Delta}{d}$ that hold at $w$ also hold at
  $v$. Moreover, $B_{\actagent}\delta$ or $B_{\actagent}\neg\delta$ ensures that the corresponding
  decision literal $\delta$ or $\neg\delta$ holds at $v$. Hence $\M,v\modelsk\subjdesc{\Delta}{d}$.
  Together with $vR_{\actagent}v$, this gives $(\M,v)\in\subjcomp{\Delta}{d}$. Therefore
  $\subjcomp{\Delta}{d}\neq\emptyset$, so $d$ is subjectively admissible.
\end{proof}

\label{sec:counting}

We show how to count and optimize norm violations for DNCOPs. We first formalize a violation
condition $\violformula{n}$ for norms: An explicit or implicit obligation is violated when it
exists, its content is false, and (only in the case of explicit obligations) its condition holds.
Prohibitions are treated analogously.

We use this to define the weighted norm violation count $\violcount{\M}{w}$ for a given model
$(\M,w)$, and subsequently the number of violations $\objviol{\Delta,d}$ and $\subjviol{\Delta,d}$
for admissible decisions $d$ in a DNCOP $\Delta$, as predicted by the objective or subjective
perspective. Since there may be multiple compatible models, our definitions use a pessimistic
worst-case\footnote{Assuming the best case is unrealistic and assuming the average is unjustified
  since compatible models need not be equally likely.} assumption

\begin{defn}\label{def:violation-count}
  Let $\Delta=\dncoptuple$ be a DNCOP. For each norm $n\in\countednorms$ we define the violation
  condition as
  \[
    \violformula{n}
    =
    \begin{cases}
      \O(\omega|\phi)\land\phi\land\neg\omega, & n=\O(\omega|\phi), \\
      \F(\omega|\phi)\land\phi\land\omega,     & n=\F(\omega|\phi), \\
      O\omega\land\neg\omega,                  & n=O\omega,         \\
      F\omega\land\omega,                      & n=F\omega.
    \end{cases}
  \]
  For a pointed model $(\M,w)$, we define its weighted norm violation count as
  \[
    \violcount{\M}{w}
    =
    \sum_{\substack{n\in\countednorms\\
        \M,w\modelsk\violformula{n}}}\normweight(n).
  \]
  If $d\subseteq\decatoms$ is objectively admissible, we define its objective violation count as
  \[
    \objviol{\Delta,d}
    =
    \max_{(\M,w)\in\objcomp{\Delta}{d}}
    \violcount{\M}{w}.
  \]
  If $d \subseteq\decatoms$ is subjectively admissible, we define its subjective violation count as
  \[
    \subjviol{\Delta,d}
    =
    \max_{(\M,w)\in\subjcomp{\Delta}{d}}
    \violcount{\M}{w}.
  \]
  Both maxima exist because, by admissibility, the respective sets of compatible models are non-empty
  and every violation count belongs to $\{0,\ldots,\sum_{n\in\countednorms}\normweight(n)\}$.
\end{defn}

\begin{remark} The use of weights allows our framework to handle the priority of more specific norms over more
  general ones (\emph{lex specialis}), and contrary-to-duty (CTD) obligations. Priority between norms
  can be encoded by assigning higher weights to more specific norms, thereby incentivizing the agent
  to comply with them whenever they apply. For CTD obligations, independently of the concrete
  positive weights assigned to the primary obligation (whose violation triggers the secondary
  obligation) and to the secondary obligation, violating both always incurs a higher cost than
  violating only the primary obligation. When multiple interacting norms are present, however,
  appropriate weights must be chosen to reflect the desired priorities, in line with the utilitarian
  approach in~\cite{DBLP:conf/ijcai/NeufeldCT25} to design norm-compliant reinforcement learning
  agents.
\end{remark}

By minimizing over all possible decisions, we now define the objectively and subjectively optimal
number of violations $\objopt{\Delta}$ and $\subjopt{\Delta}$, as well as the objectively and
subjectively optimal decisions $\objopta{\Delta}$ and $\subjopta{\Delta}$. This is well-defined for
decision-sound DNCOPs since they have at least one possible decision and all decisions are
admissible.
\begin{defn}
  Let $\Delta$ be a decision-sound DNCOP. We define its objectively and subjectively optimal norm
  violation counts as
  $$
    \objopt{\Delta}
    =
    \min_{d\in\decisions{\Delta}}\objviol{\Delta,d}, \quad \mbox{and} \quad
    \subjopt{\Delta}
    =
    \min_{d\in\decisions{\Delta}}\subjviol{\Delta,d}.
  $$
  The corresponding sets of optimal decisions are defined as
  \begin{align*}
    \objopta{\Delta}
     & =
    \{d\in\decisions{\Delta}\mid
    \objviol{\Delta,d}=\objopt{\Delta}\}, \\
    \subjopta{\Delta}
     & =
    \{d\in\decisions{\Delta}\mid
    \subjviol{\Delta,d}=\subjopt{\Delta}\}.
  \end{align*}
\end{defn}

For a DNCOP $\Delta=\dncoptuple$ and every $d\subseteq\decatoms$, the objective description
determines a propositional valuation $\objval{\Delta,d}=\objdesc{\Delta}{d}\cap\statevocab$, since
it contains exactly one of $\alpha$ and $\neg\alpha$ for every $\alpha\in\statevocab$. All
objectively compatible models assign the same truth values to the atoms in $\statevocab$. Since
each $\violformula{n}$ is a Boolean combination of these atoms, it can be evaluated under
$\objval{\Delta,d}$. Thus, whenever $d$ is objectively admissible, the maximization in the
definition of $\objviol{\Delta,d}$ is vacuous and the violation count can be computed directly from
$\objval{\Delta,d}$.

\begin{prop}\label{thm:objective-determinacy}
  Let $\Delta=\dncoptuple$ be a DNCOP and let $d\subseteq\decatoms$ be objectively admissible. For
  every pointed model $(\M,w)\in\objcomp{\Delta}{d}$, we have $\violcount{\M}{w} =
    \sum_{\substack{n\in\countednorms\\ \objval{\Delta,d}\modelspl\violformula{n}}}\normweight(n)$.
  Consequently, $\objviol{\Delta,d} = \sum_{\substack{n\in\countednorms\\
        \objval{\Delta,d}\modelspl\violformula{n}}}\normweight(n)$.
\end{prop}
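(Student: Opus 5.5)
The plan is to show that every objectively compatible pointed model agrees with the valuation $\objval{\Delta,d}$ on all atoms of $\statevocab$. Since each $\violformula{n}$ is a Boolean combination of atoms in $\statevocab$, this fixes which norms are violated. The argument has three steps.

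\textbf{Step 1 (the description is complete).} First I show that $\objdesc{\Delta}{d}$ contains exactly one of $\alpha$ and $\neg\alpha$ for each $\alpha\in\statevocab$. By Definition~\ref{def:dncop}, $\statevocab=\trueatoms\mathbin{\dot\cup}\falseatoms\mathbin{\dot\cup}\dbar$. Atoms in $\trueatoms$ appear positively in $\objdesc{\Delta}{d}$, and atoms in $\falseatoms$ appear negated. For $\dbar$, $\choice{\Delta}{d}$ contains, for each $\delta\in\decatoms$, exactly one literal over each of $\delta$, $B_{\actagent}\delta$ and $B_{\actagent}\neg\delta$. The only point to check is that the form $\neg B_{\actagent}\neg\delta$ is the negative literal of the atom $B_{\actagent}\neg\delta\in\dbar$. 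Disjointness of the three parts gives that no atom receives both polarities.

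\textbf{Step 2 (agreement on atoms).} For $(\M,w)\in\objcomp{\Delta}{d}$ we have $\M,w\modelsk\objdesc{\Delta}{d}$. So for every $\alpha\in\statevocab$, $\M,w\modelsk\alpha$ iff $\alpha\in\objval{\Delta,d}$. Here the RDDAs in $\statevocab$, including belief-prefixed ones, are treated as propositional atoms, and their truth at $w$ is given by the LDLB truth clauses.

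\textbf{Step 3 (lifting to formulas).} Each $\violformula{n}$ is built by $\neg$ and $\land$ from $n$, $\omega$ and $\phi$. By Definition~\ref{def:rdda} we have $n\in\countednorms\subseteq\statevocab$, closure (ii) gives $\omega\in\statevocab$, and closure (iii) gives $\phi\in\mathsf{Bool}(\statevocab)$. A structural induction over $\mathsf{Bool}(\statevocab)$ then shows that $\M,w\modelsk\psi$ iff $\objval{\Delta,d}\modelspl\psi$ for every $\psi\in\mathsf{Bool}(\statevocab)$, because $\neg$ and $\land$ have classical clauses in both semantics. Therefore the set of violated norms at $(\M,w)$ equals $\{n\in\countednorms\mid\objval{\Delta,d}\modelspl\violformula{n}\}$, which gives the claimed value of $\violcount{\M}{w}$.

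For the consequence, objective admissibility makes $\objcomp{\Delta}{d}$ nonempty, so the maximum in $\objviol{\Delta,d}$ is taken over a nonempty set of models that all have this same count, and it equals that value.

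The main obstacle is the bookkeeping in Step 3. One must make sure that $\violformula{n}$ really lies in $\mathsf{Bool}(\statevocab)$: the contents $\omega$ and the conditions $\phi$ are covered by the closure conditions, and for implicit norms $O\omega$ and $F\omega$ the content $\omega$ is covered by closure (ii). One must also treat RDDAs as opaque atoms instead of unfolding their belief operators.
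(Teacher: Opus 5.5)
Your proof is correct and follows the argument the paper gives in the paragraph just before the proposition. That argument runs: $\objdesc{\Delta}{d}$ contains exactly one literal for each atom of $\statevocab$, so all objectively compatible models agree with $\objval{\Delta,d}$ on $\statevocab$, and each $\violformula{n}$, being a Boolean combination of these atoms, is evaluated the same way in all of them. You add the justification of completeness and the bookkeeping for the closure conditions. One small slip: $\countednorms\subseteq\statevocab$ comes from Definition~\ref{def:dncop}, not Definition~\ref{def:rdda}.
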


The maximization over compatible models is still relevant from the subjective perspective, since
atoms relevant to the computation of violation counts may be open under the beliefs of the acting
agent.

\begin{ex}\label{ex:vehicle-factual}
  Consider the DNCOPs $\Delta_{\neg c}$ and $\Delta_c$ from
  Ex.~\ref{ex:vehicle-factual-specification}. Both DNCOPs are decision-sound and in both variants,
  the agent believes $e$ and $\O(r|e)$ -- so they always believe the obligation is in force. Since
  $c$ is left open in the agent's beliefs, moving right risks the cost-2 violation in the worst case,
  while not moving right always costs 1 (violating $\O(r|e)$); not moving right is therefore
  subjectively optimal regardless of the actual truth of $c$. Objectively, the optimal decision
  matches the true value of $c$: if no cyclist is present, moving right is objectively optimal; if
  one is present, not moving right is objectively optimal, as it is subjectively.
\end{ex}

\subsection{When the Two Perspectives Coincide}
\label{sec:subjobj}

The mismatch described in the above example (subjective and objective optimum disagreeing) arises
because the norms' conditions ($e$ and $c$) are objective facts the agent may be unsure about. If
instead every norm and every norm's condition is stated directly in terms of the agent's own
beliefs, its truth value is automatically settled by the agent's beliefs at every subjectively
compatible model, making the subjective violation cost determinate. If these beliefs are also
truthful, the two perspectives coincide.
\begin{defn}
  A DNCOP $\Delta=\dncoptuple$ is \emph{local} if every norm $n\in\countednorms$, writing $\omega$
  for its content and $\phi$ for its condition (in the case of explicit norms), satisfies the
  following conditions:
  \begin{enumerate}[label=(\roman*)]
    \item either $n\in\decatoms$, or exactly one of $B_{\actagent}n$ and $B_{\actagent}\neg n$ belongs to
          $\trueatoms$;
    \item either $\omega\in\decatoms$, or exactly one of $B_{\actagent}\omega$ and $B_{\actagent}\neg\omega$
          belongs to $\trueatoms$; and
    \item $\phi\in\mathsf{Bool}(\siga)$ (if $n$ is an explicit norm).
  \end{enumerate}
  A DNCOP is \emph{truthful} if every belief of the acting agent about a fixed atom is correct.
  Formally, for every $\alpha\in\trueatoms\cup\falseatoms$, we require
  $B_{\actagent}\alpha\in\trueatoms\Longrightarrow\alpha\in\trueatoms$ and
  $B_{\actagent}\neg\alpha\in\trueatoms\Longrightarrow\alpha\in\falseatoms$.
\end{defn}
Locality reflects a common design pattern for actionable norms: rather than conditioning
obligations on facts the agent may only partially observe, the norm is phrased directly in terms of
the agent's own (introspectively accessible) beliefs. The next example restates the norms of
Example~\ref{ex:vehicle-factual} in this style, and we will see that this removes the discrepancy
between the two examples. Truthfulness is a complementary idealization, assuming the agent's
beliefs are accurate.

\begin{ex}\label{ex:vehicle-doxastic}
  Replace in Ex.~\ref{ex:vehicle-factual-specification} the norms with $n_O=\O(r | B_{\actagent}
    e\land\neg\hat{B}_{\actagent} c)$ (``move right if you believe an emergency vehicle is coming and
  do not consider a cyclist possible'') and $n_F=\F(r | \hat{B}_{\actagent} c)$ (``do not move right
  if you cannot rule out a cyclist'') with $\normweight(n_O)=\normweight(n_F)=1$. Hence
  $\countednorms=\{n_O\COMMA n_F\}$, $\decatoms=\{r\}$, and
  $$\trueatoms =
    \{e\COMMA n_O\COMMA n_F\COMMA B_{\actagent}e\COMMA B_{\actagent}n_O\COMMA B_{\actagent}n_F\},
    \quad
    \falseatoms=
    \{c\COMMA B_{\actagent}c\COMMA B_{\actagent}\neg c\}.
  $$
  The resulting DNCOP is decision-sound. Objectively, an emergency vehicle is approaching from behind
  and no cyclist is present to the right of the car. The acting agent believes that the emergency
  vehicle is approaching but has no belief about whether a cyclist is present. Since
  $B_{\actagent}\neg c \in \falseatoms$, the formula $\hat{B}_{\actagent} c$ is true: moving right
  has objective and subjective cost one, whereas not moving right has cost zero. Thus, not moving
  right is uniquely optimal from both perspectives. Both norms condition only on the agent's own
  beliefs, so the DNCOP is local; since the agent's beliefs are in fact accurate, it is truthful too.
  The two conditions are mutually exclusive, so no priority between the norms is ever needed.

\end{ex}
We show below that in a local, decision-sound DNCOP $\Delta$, all subjectively compatible models
for a given decision agree on the violation count: the worst case in $\subjviol{\Delta,d}$ is not
actually in doubt.
\begin{prop}\label{prop:subjective-determinacy}
  Let $\Delta$ be a decision-sound local DNCOP. For every $d\in\decisions{\Delta}$ and all
  $(\M,w),(\M',w')\in\subjcomp{\Delta}{d}$, $\violcount{\M}{w}=\violcount{\M'}{w'}$. Hence the
  maximization defining $\subjviol{\Delta,d}$ is vacuous.
\end{prop}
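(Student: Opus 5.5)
The plan is to show that for every norm $n\in\countednorms$ the truth value of $\violformula{n}$ is the same at every pointed model $(\M,w)\in\subjcomp{\Delta}{d}$. The sum $\violcount{\M}{w}$ is then fixed, and the maximum in $\subjviol{\Delta,d}$ is taken over a constant. Since $\violformula{n}$ is a conjunction built from $n$, its content $\omega$, and (for explicit norms) its condition $\phi$, it suffices to show that each of these three components has a determinate truth value across subjectively compatible models.

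The key tool is self-trust. Since $wR_{\actagent}w$ and $R_{\actagent}$ is transitive and euclidean, $B_{\actagent}\psi$ at $w$ implies $\psi$ at $w$, as the paper's footnote notes. Now take a component $x\in\{n,\omega\}$. By locality (i)/(ii), either $x\in\decatoms$, or exactly one of $B_{\actagent}x$, $B_{\actagent}\neg x$ is in $\trueatoms$.

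\textbf{Case $x\in\decatoms$.} The literal $x$ or $\neg x$ is in $\choice{\Delta}{d}\subseteq\subjdesc{\Delta}{d}$, so its value is fixed by $d$.

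\textbf{Case $B_{\actagent}x\in\trueatoms$ (resp.\ $B_{\actagent}\neg x\in\trueatoms$).} This atom lies in $\trueatoms\cap\siga$ and hence in $\subjdesc{\Delta}{d}$. Self-trust then forces $x$ true (resp.\ false) at $w$.

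Here I have to check that $B_{\actagent}x$ and $B_{\actagent}\neg x$ are in $\siga$. They are, since they are RDDAs in $\statevocab$ with outermost operator $B_{\actagent}$. I also need to handle the case where the atom $B_{\actagent}\neg x$ appears as a formula $B_{\actagent}\neg x$; that reading is fine.

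For the condition $\phi$, locality (iii) gives $\phi\in\mathsf{Bool}(\siga)$, so it suffices that every atom $\beta\in\siga$ has a determinate value. Take such a $\beta$. If $\beta\in\trueatoms$ or $\beta\in\falseatoms$, the literal $\beta$ or $\neg\beta$ is in $\subjdesc{\Delta}{d}$. Otherwise $\beta\in\dbar$, and since it starts with $B_{\actagent}$ it has the form $B_{\actagent}\delta$ or $B_{\actagent}\neg\delta$ with $\delta\in\decatoms$. Its value is then fixed by the belief literals in $\choice{\Delta}{d}$. Hence every atom of $\siga$ is settled, and $\phi$ is settled by classical evaluation.

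Combining these, $\violformula{n}$ has the same value in all subjectively compatible models, which gives the claim. Decision-soundness together with Proposition~\ref{prop:objective-subjective-admissibility} guarantees that $\subjcomp{\Delta}{d}$ is nonempty, so the maximum is well defined.

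I expect the main obstacle to be the bookkeeping step showing that $\subjdesc{\Delta}{d}$ assigns a definite value to every atom of $\siga$. This relies on the disjoint partition $\statevocab=\trueatoms\mathbin{\dot\cup}\falseatoms\mathbin{\dot\cup}\dbar$. It also relies on observing that the only $B_{\actagent}$-outermost atoms in $\dbar$ are the beliefs about decisions, which $\choice{\Delta}{d}$ fixes. Everything else is routine.
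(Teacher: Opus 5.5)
Your proof is correct and follows the same route as the paper's: the norm and its content are fixed either by $\choice{\Delta}{d}$ or by the agent's fixed belief together with $wR_{\actagent}w$, and the local condition is settled because $\subjdesc{\Delta}{d}$ assigns a value to every atom of $\siga$ via the partition of $\statevocab$. You spell out the bookkeeping that the paper leaves implicit; note that reflexivity at $w$ alone already yields $\psi$ from $B_{\actagent}\psi$ there, so transitivity and euclideanness are not needed for that step.
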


\begin{proof}
  For each $n\in\countednorms$, the choice or, by reflexivity, the corresponding belief fixes both
  $n$ and its content in every subjectively compatible model. The subjective fixed and choice
  descriptions fix all RDDAs in its local condition. Hence all such models agree on whether each $n$
  is violated, and therefore on the total count.
\end{proof}

If the DNCOP is additionally truthful, the subjective and objective violation counts coincide for
every decision, and hence so do the optimal decisions and their violation counts under both
perspectives.
\begin{prop}\label{prop:subjective-objective-agreement}
  Let $\Delta$ be a decision-sound, local, and truthful DNCOP. Then
  $\subjviol{\Delta,d}=\objviol{\Delta,d}$ for every $d\in\decisions{\Delta}$. Consequently,
  $\subjopt{\Delta}=\objopt{\Delta}$ and $\subjopta{\Delta}=\objopta{\Delta}$.
\end{prop}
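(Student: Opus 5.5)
Fix $d\in\decisions{\Delta}$. By decision-soundness $d$ is objectively admissible, and by Proposition~\ref{prop:objective-subjective-admissibility} it is also subjectively admissible, so both $\objviol{\Delta,d}$ and $\subjviol{\Delta,d}$ are defined. By Proposition~\ref{thm:objective-determinacy}, $\objviol{\Delta,d}$ equals the count obtained by evaluating each $\violformula{n}$ under the valuation $\objval{\Delta,d}$. By Proposition~\ref{prop:subjective-determinacy}, $\subjviol{\Delta,d}$ equals $\violcount{\M}{w}$ for any single $(\M,w)\in\subjcomp{\Delta}{d}$. It therefore suffices to exhibit one subjectively compatible model whose violation count equals the objective one. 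The plan is to start from an objectively compatible model and move to a reflexive $\actagent$-successor, as in the proof of Proposition~\ref{prop:objective-subjective-admissibility}.

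Concretely, take $(\M,w)\in\objcomp{\Delta}{d}$ and choose $v$ with $wR_{\actagent}v$. That proof already shows $vR_{\actagent}v$, $R_{\actagent}(v)=R_{\actagent}(w)$, and $(\M,v)\in\subjcomp{\Delta}{d}$. It remains to show that $\M,v\modelsk\violformula{n}$ iff $\M,w\modelsk\violformula{n}$ for every $n\in\countednorms$. Each $\violformula{n}$ is a conjunction of three kinds of components: the norm atom $n$, the content $\omega$ (or its negation), and, for explicit norms, the condition $\phi$. We show each component has the same truth value at $v$ and $w$.

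For the condition: by locality (iii), $\phi\in\mathsf{Bool}(\siga)$. RDDAs with outermost operator $B_{\actagent}$ depend only on $R_{\actagent}(\cdot)$, and $R_{\actagent}(v)=R_{\actagent}(w)$, so $\phi$ agrees at $v$ and $w$.

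For the norm atom $n$ (the content $\omega$ is handled identically using locality (ii)), locality (i) gives two cases.
\begin{itemize}
\item If $n\in\decatoms$, then $\choice{\Delta}{d}$ contains $B_{\actagent}n$ or $B_{\actagent}\neg n$ matching the literal of $n$ true at $w$. These hold at $w$, hence at $v$ since $v\in R_{\actagent}(w)$, so $n$ has the same value at $v$ as at $w$.
\item Otherwise exactly one of $B_{\actagent}n$, $B_{\actagent}\neg n$ lies in $\trueatoms$. Say $B_{\actagent}n\in\trueatoms$. Truthfulness gives $n\in\trueatoms$, so $n$ is true at $w$. Since $B_{\actagent}n$ holds at $w$ and $wR_{\actagent}v$, $n$ is also true at $v$. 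The case $B_{\actagent}\neg n\in\trueatoms$ is symmetric, with $n\in\falseatoms$.
\end{itemize}
The truthfulness step requires $n\in\trueatoms\cup\falseatoms$. This holds because $n\notin\decatoms$ and $n$ is not of the form $B_{\actagent}\delta$, so $n\notin\dbar$, and the partition $\statevocab=\trueatoms\mathbin{\dot\cup}\falseatoms\mathbin{\dot\cup}\dbar$ places $n$ in $\trueatoms\cup\falseatoms$. The same argument applies to $\omega$.

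Hence $\violcount{\M}{v}=\violcount{\M}{w}$. Combining with the two determinacy propositions gives $\subjviol{\Delta,d}=\objviol{\Delta,d}$. Since this holds for every $d\in\decisions{\Delta}$, taking the minimum over $\decisions{\Delta}$ gives $\subjopt{\Delta}=\objopt{\Delta}$, and the argmin sets $\subjopta{\Delta}$ and $\objopta{\Delta}$ coincide.

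The step I expect to be the main obstacle is the bookkeeping in the second case above: confirming that a fixed norm or content lies in $\trueatoms\cup\falseatoms$, so that truthfulness applies to it.
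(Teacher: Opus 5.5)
Your proof is correct, but it is organized differently from the paper's.

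The paper argues directly and universally. It takes an arbitrary $(\M',w')\in\subjcomp{\Delta}{d}$ and shows that every atom occurring in each $\violformula{n}$ receives at $w'$ the value prescribed by $\objval{\Delta,d}$:
\begin{itemize}
\item decision atoms are fixed via $\choice{\Delta}{d}$;
\item the norm and its content are fixed via the belief $B_{\actagent}n$ or $B_{\actagent}\neg n$ in $\trueatoms\cap\siga$, together with reflexivity $w'R_{\actagent}w'$ and truthfulness;
\item condition atoms agree because both descriptions fix $\siga$ identically.
\end{itemize}
It then invokes Proposition~\ref{thm:objective-determinacy}.

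You instead prove an existential statement and lift it. From an objectively compatible $(\M,w)$ you take the particular subjectively compatible $(\M,v)$ of Proposition~\ref{prop:objective-subjective-admissibility}. You compare $v$ with $w$ semantically, through $R_{\actagent}(v)=R_{\actagent}(w)$ and the transfer of $B_{\actagent}n$ from $w$ to $v$. You then rely on Proposition~\ref{prop:subjective-determinacy} to conclude that this one model's count is the subjective maximum.

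The paper's route is more self-contained. Its agreement argument needs only the descriptions, reflexivity at the evaluation point, and truthfulness. It yields the stronger pointwise fact that every subjectively compatible model agrees with $\objval{\Delta,d}$ on each $\violformula{n}$, which already subsumes subjective determinacy.

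Your route is more modular and reuses earlier results as black boxes. It also makes the role of truthfulness especially transparent: the belief at $w$ by itself fixes $n$ at $v$, and truthfulness is needed exactly so that $w$ itself agrees with it. Your explicit check that a non-decision norm or content lies in $\trueatoms\cup\falseatoms$, so that truthfulness applies, fills in a step the paper leaves implicit.
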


\begin{proof}
  For decision atoms, the objective and subjective descriptions agree by construction. For the other
  atoms, locality provides a positive or negative belief, and truthfulness makes $a$'s beliefs agree
  with what is actually true. Both descriptions also agree on every RDDA in norm conditions. Thus
  every subjectively compatible model evaluates each $\violformula{n}$ according to
  $\objval{\Delta,d}$. Prop.~\ref{thm:objective-determinacy} gives equality of the costs, and both
  optimizations range over $\decisions{\Delta}$.
\end{proof}

\subsection{Reduction to MaxSAT}\label{sec:maxsat}

For local DNCOPs, every subjective violation is a Boolean function of the decision. We can
therefore optimize subjective violations directly with MaxSAT. Under truthfulness, the same
construction also optimizes objective violations.

\begin{defn}\label{def:maxsat-translation}
  Let $\Delta=\dncoptuple$ be a local DNCOP. We directly use the decision atoms $\delta\in\decatoms$
  as Boolean variables for the MaxSAT problem. Apart from these decision atoms, the translation uses
  only the acting agent's belief part of the DNCOP, given by $\trueatoms\cap\siga$ and
  $\falseatoms\cap\siga$.

  We first translate the decision constraint $\chi$ polynomially to a CNF using a Tseitin
  transformation~\cite{Tseitin68}. The resulting clauses are used as hard constraints.

  For every $n\in N$, we then define a Boolean formula $\widehat{\chi}_n$ over $\decatoms$
  characterizing the condition $\chi_n$ under which $n$ is violated. We obtain this formula by
  replacing each RDDA occurring directly in $\chi_n$, that is, each RDDA occurring as a propositional
  atom of $\chi_n$, as follows:
  \begin{enumerate}[label=(\roman*)]
    \item any $\delta$ or $B_a\delta$ with $\delta\in\decatoms$ is replaced by $\delta$,
    \item any $B_a\neg\delta$ with $\delta\in\decatoms$ is replaced by $\neg\delta$,
    \item any remaining objective atom $\alpha\in\atomsb$ (such as $n$ itself and its content $\omega$) is
          replaced by $\top$ if $B_a\alpha\in\trueatoms\cap\siga$, and by $\bot$ if
          $B_a\neg\alpha\in\trueatoms\cap\siga$, and
    \item any remaining $\alpha\in\siga$ is replaced by $\top$ if $\alpha\in\trueatoms\cap\siga$, and by
          $\bot$ if $\alpha\in\falseatoms\cap\siga$.
  \end{enumerate}
  Locality guarantees that every RDDA occurring in $\chi_n$ is covered by exactly one of these cases.
  For each norm $n\in N$, we then introduce a fresh variable $z_n$ and impose
  $z_n\leftrightarrow\widehat{\chi}_n$ as a hard constraint, adding $\neg z_n$ as a soft clause of
  weight $v(n)$. Thus, weight $v(n)$ is incurred exactly when $n$ is violated. All hard constraints
  are converted to CNF using a Tseitin transformation.

\end{defn}

\begin{ex}
  Consider the local DNCOP from Example~\ref{ex:vehicle-doxastic}. Its violation conditions are
  $\chi_{n_O}=n_O\land B_a e\land\neg\hat B_a c\land\neg r$ and $\chi_{n_F}=n_F\land\hat B_a c\land
    r$.

  The fixed beliefs make $n_O,n_F$, and $B_a e$ true, while $B_a\neg c$ is false and hence $\hat B_a
    c=\neg B_a\neg c$ is true. Therefore, the translation gives $\widehat{\chi}_{n_O}=\bot$ and
  $\widehat{\chi}_{n_F}=r$. Since the decision constraint is $\top$, the hard clauses simplify to $
    \neg z_{n_O}, \neg z_{n_F}\lor r$, and $z_{n_F}\lor\neg r, $ and the soft clauses are $(\neg
    z_{n_O},1)$ and $(\neg z_{n_F},1)$. Setting $r$ false satisfies both soft clauses, while setting it
  true falsifies the second at cost one. Thus the unique optimal assignment encodes $d=\emptyset$, as
  found in Ex.~\ref{ex:vehicle-doxastic}.
\end{ex}

\begin{thm}\label{thm:maxsat-optimization}
  For a decision-sound local DNCOP $\Delta$, the construction above is a polynomial-time reduction to
  weighted partial MaxSAT. The restrictions of its optimal assignments to $\decatoms$ encode exactly
  the subjectively optimal decisions. If $\Delta$ is also truthful, they also encode exactly the
  objectively optimal decisions.
\end{thm}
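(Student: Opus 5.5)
The proof splits into three parts: the construction is polynomial, every assignment's soft cost equals the subjective violation count of the decision it encodes, and the truthful case follows from Prop.~\ref{prop:subjective-objective-agreement}.

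\emph{Polynomial size.} The Tseitin transformation of $\decconstraint$ is linear in $|\decconstraint|$. For each $n\in\countednorms$, the formula $\violformula{n}$ has size linear in $n$ (content plus condition). The substitution defining $\widehat{\chi}_n$ inspects each occurring RDDA once and decides membership in the finite sets $\trueatoms\cap\siga$, $\falseatoms\cap\siga$, $\decatoms$. The constraint $z_n\leftrightarrow\widehat{\chi}_n$ is then Tseitin-encoded in linear size. Locality ensures the case distinction (i)--(iv) is exhaustive, so the construction is well defined.

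\emph{Semantic correctness.} This is the main step. Fix $d\in\decisions{\Delta}$ and take any $(\M,w)\in\subjcomp{\Delta}{d}$; such a model exists by decision-soundness and Prop.~\ref{prop:objective-subjective-admissibility}. I will show that every RDDA $\alpha$ occurring in $\violformula{n}$ has, at $w$, the truth value given by its replacement in $\widehat{\chi}_n$ under the assignment $d$. The cases are as follows.
\begin{enumerate}[label=(\roman*)]
\item For $\delta$ and $B_a\delta$, the set $\choice{\Delta}{d}$ fixes their values to $\delta\in d$.
\item For $B_a\neg\delta$, $\choice{\Delta}{d}$ likewise fixes the value to $\delta\notin d$.
\item For a non-decision objective atom $\alpha$ with $B_a\alpha\in\trueatoms$ (resp.\ $B_a\neg\alpha\in\trueatoms$), we have $\M,w\modelsk B_a\alpha$, and reflexivity $wR_aw$ gives $\M,w\modelsk\alpha$ (resp.\ $\neg\alpha$).
\item For $\alpha\in\siga$, the value is fixed directly by $\subjdesc{\Delta}{d}$.
\end{enumerate}
The delicate point is checking exhaustiveness. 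Conditions (i) and (ii) of locality cover $n$ and $\omega$. Condition (iii) places the condition $\phi$ in $\mathsf{Bool}(\siga)$, whose atoms fall under (i), (ii) or (iv). One must also note that abbreviations such as $\hat B_a c$ unfold into $\neg B_a\neg c\in\siga$. By structural induction, $\M,w\modelsk\violformula{n}$ iff $d\modelspl\widehat{\chi}_n$, which reproves Prop.~\ref{prop:subjective-determinacy} and gives
\[ \subjviol{\Delta,d}=\sum_{n\in\countednorms,\ d\modelspl\widehat{\chi}_n}\normweight(n). \]

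\emph{Optimal assignments and the truthful case.} Any assignment satisfying the hard clauses restricts on $\decatoms$ to some $d$ with $d\modelspl\decconstraint$, by correctness of Tseitin. Its $z_n$ are forced to equal $\widehat{\chi}_n(d)$, since the Tseitin auxiliaries are uniquely determined. Its soft cost is therefore exactly $\subjviol{\Delta,d}$. Conversely, every $d\in\decisions{\Delta}$ extends to such an assignment. Minimizing MaxSAT cost is thus minimizing $\subjviol{\Delta,\cdot}$ over $\decisions{\Delta}$, so the restrictions of optimal assignments are exactly $\subjopta{\Delta}$. If $\Delta$ is also truthful, Prop.~\ref{prop:subjective-objective-agreement} gives $\subjopta{\Delta}=\objopta{\Delta}$, which yields the last claim.
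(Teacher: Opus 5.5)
Your proposal is correct and follows the paper's proof: Tseitin-encode the hard constraints, use locality to check that $\widehat{\chi}_n$ evaluated at $d$ matches whether $n$ is violated in every subjectively compatible model, conclude that the soft cost equals $\subjviol{\Delta,d}$, and invoke Proposition~\ref{prop:subjective-objective-agreement} for the truthful case. The only difference is that you re-derive Proposition~\ref{prop:subjective-determinacy} inline (via $wR_{\actagent}w$ and the subjective description) instead of citing it, which makes the exhaustiveness check more explicit but leaves the argument unchanged.
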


\begin{proof}
  For $d\subseteq\decatoms$, let $I_d$ be the valuation defined by $I_d(\delta)=1$ iff $\delta\in d$.
  By correctness of the Tseitin transformation, $I_d$ can be extended to a valuation satisfying the
  hard clauses obtained from $\chi$ iff $d\modelspl\chi$, that is, iff $d\in\decisions{\Delta}$.

  Now consider any $n\in N$. Cases (i) and (ii) reproduce the values fixed by the decision, case
  (iii) reproduces the acting agent's fixed beliefs about the remaining objective atoms, and case
  (iv) reproduces the fixed values of the remaining belief atoms. By locality, these cases cover
  every RDDA occurring directly in $\chi_n$. A structural induction on $\chi_n$ therefore shows that
  $I_d\models\widehat{\chi}_n$ iff $n$ is subjectively violated under $d$. The hard constraint
  $z_n\leftrightarrow\widehat{\chi}_n$ thus makes $z_n$ true exactly when $n$ is subjectively
  violated. Consequently, the total weight of the falsified soft clauses in any hard-satisfying
  extension of $I_d$ equals $\subjviol{\Delta,d}$ by Proposition~\ref{prop:subjective-determinacy}.
  Conversely, every assignment satisfying the hard constraints restricts to $I_d$ for some
  $d\in\decisions{\Delta}$. Minimizing the weight of the falsified soft clauses therefore yields
  exactly the subjectively optimal decisions.

  If $\Delta$ is truthful, Proposition~\ref{prop:subjective-objective-agreement} gives
  $\subjviol{\Delta,d}=\objviol{\Delta,d}$ for every $d\in\decisions{\Delta}$. Hence the subjective
  and objective optimal decisions coincide. Finally, the replacements increase the formula size at
  most linearly, and the Tseitin transformations are polynomial.
\end{proof}

\section{Conclusions and Future Work}
We introduced a simple logical framework for norm-guided decision making under uncertainty.
Starting from a restricted fragment of Chellas' Minimal Deontic Logic, we identified a language
where deontic reasoning reduces to classical propositional reasoning while retaining obligations
and prohibitions over atomic actions. We extended this fragment with explicitly represented
conditional norms and factual detachment, obtaining a lightweight deontic logic.

To reason under incomplete information, we combined the framework with multi-agent doxastic logic,
allowing norms to depend on agents' beliefs and beliefs about norms. We introduced the Doxastic
Norm Compliance Optimization Problem (DNCOP), where an agent selects decisions minimizing weighted
norm violations. By distinguishing optimization relative to the agent's beliefs from optimization
relative to the objective state of the world, the framework captures a key aspect of autonomous
decision making under uncertainty. We identified locality and truthfulness conditions under which
these perspectives coincide, guaranteeing that subjectively optimal decisions are also objectively
optimal.

The framework also admits efficient computation. For local, decision-sound DNCOPs, we showed that
optimal decisions can be found by a polynomial-time reduction to weighted partial MaxSAT,
leveraging mature optimization techniques while preserving logical rigour and transparency.

The framework deliberately favors simplicity over expressiveness. Restricting normative contents to
atomic propositions enables reductions to propositional reasoning and straightforward integration
with epistemic formalisms, while still covering a broad range of practical decision-making
scenarios. Its modular design leaves room for richer normative and epistemic extensions.

Several directions remain for future work. We plan to analyze LDLB in more detail; we conjecture
that LDLB-satisfiability is PSPACE-complete, as is plain multi-agent
KD45~\cite{DBLP:journals/ai/HalpernM92}. An immediate objective is to extend the optimization
framework beyond local DNCOPs, where subjective optimization may require reasoning over multiple
compatible belief models. We also plan to enrich the representation of agents' beliefs, allowing
belief bases to capture more complex propositional information, including constitutive norms.
Appropriate candidates are belief bases in the style of Lorini~\cite{DBLP:journals/ai/Lorini20}.
Another promising direction is to study decentralized settings in which multiple agents make
decisions simultaneously while reasoning about one another's beliefs and intended actions. Finally,
we intend to implement the MaxSAT translation and experimentally evaluate its scalability on
representative normative decision-making benchmarks.

\begingroup
\small
\subsubsection*{Acknowledgments}
This research was funded in whole or in part by the Vienna Science and Technology Fund (WWTF)
project ICT22-023 and the Austrian Science Fund (FWF) 10.55776/COE12 and 10.55776/PAT1815025.
\endgroup

\bibliographystyle{splncs04}
\bibliography{bibliography}

@book{Hintikka1962,
author = {Hintikka, Jaakko},
publisher = {Cornell University Press},
title = {{Knowledge and Belief: An Introduction to the Logic of the Two Notions}},
year = {1962}
}

@inproceedings{DBLP:conf/deon/CiabattoniU025,
  author       = {Agata Ciabattoni and
                  Blaz Istenic Urh and
                  Xavier Parent},
  title        = {Deontic Explanations in {\AA}qvist's Systems},
  booktitle    = {{DEON}},
  pages        = {79--98},
  publisher    = {College Publications},
  year         = {2025}
}

@inproceedings{DBLP:conf/ijcai/NeufeldCT25,
  author       = {Emery A. Neufeld and
                  Agata Ciabattoni and
                  Radu Florin Tulcan},
  title        = {Combining {MORL} with Restraining Bolts to Learn Normative Behaviour},
  booktitle    = {{IJCAI}},
  pages        = {4615--4623},
  publisher    = {ijcai.org},
  year         = {2025}
}

@Incollection{A84,
author="{\AA}qvist, L.",
editor="Gabbay, D. and Guenthner, F.",
title="Deontic Logic",
bookTitle="Handbook of Philosophical Logic: Volume II",
year="1984",
publisher="Springer",
pages="605--714"
}

@book{mill1863utilitarianism,
  author     = {Mill, John Stuart},
  title      = {Utilitarianism},
  publisher = {Parker, Son, and Bourn},
  year      = {1863}
}

@article{DBLP:journals/cacm/SergotSKKHC86,
  author       = {Marek J. Sergot and
                  Fariba Sadri and
                  Robert A. Kowalski and
                  F. Kriwaczek and
                  Peter Hammond and
                  H. T. Cory},
  title        = {The British Nationality Act as a Logic Program},
  journal      = {Commun. {ACM}},
  volume       = {29},
  number       = {5},
  pages        = {370--386},
  year         = {1986}
}

@article{DBLP:journals/jphil/KowalskiS18,
  author       = {Robert A. Kowalski and
                  Ken Satoh},
  title        = {Obligation as Optimal Goal Satisfaction},
  journal      = {J. Philos. Log.},
  volume       = {47},
  number       = {4},
  pages        = {579--609},
  year         = {2018}
}

@article{Tseitin68,
  author       = {G. S. Tseitin},
  title        = {On the Complexity of Derivation in Propositional Calculus},
  journal      = {Zap. Nauchn. Sem. LOMI},
  volume       = {8},
  pages        = {234--259},
  year         = {1968}
}

@book{DBLP:books/cu/Chellas80,
  author       = {Brian F. Chellas},
  title        = {Modal Logic - An Introduction},
  publisher    = {Cambridge University Press},
  year         = {1980}
}

@article{DBLP:journals/jphil/MakinsonT00,
  author       = {David Makinson and
                  Leendert W. N. van der Torre},
  title        = {Input/Output Logics},
  journal      = {J. Philos. Log.},
  volume       = {29},
  number       = {4},
  pages        = {383--408},
  year         = {2000}
}

@inproceedings{DBLP:conf/jelia/CabalarCT23,
  author       = {Pedro Cabalar and
                  Agata Ciabattoni and
                  Leendert van der Torre},
  title        = {Deontic Equilibrium Logic with eXplicit Negation},
  booktitle    = {{JELIA}},
  series       = {Lecture Notes in Computer Science},
  volume       = {14281},
  pages        = {498--514},
  publisher    = {Springer},
  year         = {2023}
}

@article{von1967deontic,
  author       = {G. H. von Wright},
  title        = {Deontic Logics},
  journal      = {Am. Philos. Q.},
  volume       = {4},
  number       = {2},
  pages        = {136--143},
  year         = {1967}
}

@article{Ross1941,
  author       = {Alf Ross},
  title        = {Imperatives and Logic},
  journal      = {Philos. Sci.},
  volume       = {11},
  number       = {1},
  pages        = {30--46},
  year         = {1944}
}

@article{DBLP:journals/jphil/GovernatoriORS13,
  author       = {Guido Governatori and
                  Francesco Olivieri and
                  Antonino Rotolo and
                  Simone Scannapieco},
  title        = {Computing Strong and Weak Permissions in Defeasible Logic},
  journal      = {J. Philos. Log.},
  volume       = {42},
  number       = {6},
  pages        = {799--829},
  year         = {2013}
}

@Book{Hand13,
  editor =	 {Dov Gabbay and John Horty and Xavier Parent and Ron van der Meyden and Leendert van der Torre},
  title = 	 {Handbook of Deontic Logic and Normative Systems},
  publisher = 	 {College Publications},
  year = 	 {2013},
}

@book{Hand21,
    title = {Handbook of Deontic Logic and Normative Systems, Volume 2},
    editor = {Dov Gabbay and John Horty and Xavier Parent and Ron van der Meyden and Leendert van der Torre},
    year = {2021},
  publisher = 	 {College Publications},
}

@incollection{xavier_survey,
    author = {Xavier Parent},
  title = {Preference Semantics for {Hansson}-type Dyadic Deontic Logic: A Survey of Results},
  booktitle = {Handbook of Deontic Logic and Normative Systems, Volume 2},
    publisher = {College Publications},
    year = {2021},
pages = {7--69},
editor ={D. Gabbay and J. Horty and X. Parent and R. van der Meyden and L. van der Torre}
}

@inproceedings{DBLP:conf/icail/GiordanoMD13,
  author       = {Laura Giordano and
                  Alberto Martelli and
                  Daniele Theseider Dupr{\'{e}}},
  title        = {Temporal deontic action logic for the verification of compliance to
                  norms in {ASP}},
  booktitle    = {{ICAIL}},
  pages        = {53--62},
  publisher    = {{ACM}},
  year         = {2013}
}

@article{DBLP:journals/jancl/AlferesGL13,
  author       = {Jos{\'{e}} J{\'{u}}lio Alferes and
                  Ricardo Gon{\c{c}}alves and
                  Jo{\~{a}}o Leite},
  title        = {Equivalence of defeasible normative systems},
  journal      = {J. Appl. Non Class. Logics},
  volume       = {23},
  number       = {1-2},
  pages        = {25--48},
  year         = {2013}
}

@article{DBLP:journals/jphil/MakinsonT01,
  author       = {David Makinson and
                  Leendert W. N. van der Torre},
  title        = {Constraints for Input/Output Logics},
  journal      = {J. Philos. Log.},
  volume       = {30},
  number       = {2},
  pages        = {155--185},
  year         = {2001}
}

@article{DBLP:journals/jolli/BerkelCFGO23,
  author       = {Kees van Berkel and
                  Agata Ciabattoni and
                  Elisa Freschi and
                  Francesca Gulisano and
                  Maya Olszewski},
  title        = {Deontic Paradoxes in M{\={\i}}m{\={a}}{\d{m}}s{\={a}} Logics:
                  There and Back Again},
  journal      = {J. Log. Lang. Inf.},
  volume       = {32},
  number       = {1},
  pages        = {19--62},
  year         = {2023}
}

@inproceedings{DBLP:conf/ecai/LakemeyerL12,
  author       = {Gerhard Lakemeyer and
                  Yves Lesp{\'{e}}rance},
  title        = {Efficient Reasoning in Multiagent Epistemic Logics},
  booktitle    = {{ECAI}},
  series       = {Frontiers in Artificial Intelligence and Applications},
  volume       = {242},
  pages        = {498--503},
  publisher    = {{IOS} Press},
  year         = {2012}
}

@inproceedings{DBLP:conf/aaai/MuiseBFMMPS15,
  author       = {Christian J. Muise and
                  Vaishak Belle and
                  Paolo Felli and
                  Sheila A. McIlraith and
                  Tim Miller and
                  Adrian R. Pearce and
                  Liz Sonenberg},
  title        = {Planning Over Multi-Agent Epistemic States: {A} Classical Planning
                  Approach},
  booktitle    = {{AAAI}},
  pages        = {3327--3334},
  publisher    = {{AAAI} Press},
  year         = {2015}
}

@article{DBLP:journals/ai/HalpernM92,
  author       = {Joseph Y. Halpern and
                  Yoram Moses},
  title        = {A Guide to Completeness and Complexity for Modal Logics of Knowledge
                  and Belief},
  journal      = {Artif. Intell.},
  volume       = {54},
  number       = {3},
  pages        = {319--379},
  year         = {1992}
}

@article{DBLP:journals/ai/Lorini20,
  author       = {Emiliano Lorini},
  title        = {Rethinking epistemic logic with belief bases},
  journal      = {Artif. Intell.},
  volume       = {282},
  pages        = {103233},
  year         = {2020}
}

@inproceedings{DBLP:conf/sat/AnsoteguiBL09,
  author       = {Carlos Ans{\'{o}}tegui and
                  Maria Luisa Bonet and
                  Jordi Levy},
  title        = {Solving (Weighted) Partial MaxSAT through Satisfiability Testing},
  booktitle    = {{SAT}},
  series       = {Lecture Notes in Computer Science},
  volume       = {5584},
  pages        = {427--440},
  publisher    = {Springer},
  year         = {2009}
}

\end{document}